\documentclass[10pt]{article} 
\usepackage[preprint]{tmlr}


\usepackage{amsmath,amsfonts,bm}









\def\eqref#1{equation~\ref{#1}}









\def\1{\bm{1}}










\DeclareMathAlphabet{\mathsfit}{\encodingdefault}{\sfdefault}{m}{sl}
\SetMathAlphabet{\mathsfit}{bold}{\encodingdefault}{\sfdefault}{bx}{n}













\usepackage{hyperref}
\usepackage{url}

\usepackage[linesnumbered,lined,boxed,commentsnumbered]{algorithm2e}
\usepackage{algpseudocode}
\usepackage{microtype}
\usepackage{graphicx}
\usepackage{subcaption}
\usepackage{booktabs} 
\usepackage{arydshln}
\usepackage{hyperref}
\usepackage{cuted}
\usepackage{wrapfig}
\usepackage[nobiblatex]{xurl}


\usepackage{amsmath}
\usepackage{amssymb}
\usepackage{mathtools}
\usepackage{amsthm}
\usepackage[capitalize,noabbrev]{cleveref}
\usepackage{anyfontsize}
\usepackage{pifont} 
\usepackage{subfiles}
\usepackage{adjustbox}  
\usepackage{graphicx}
\usepackage{xspace}
\usepackage[table,dvipsnames]{xcolor}
\usepackage{booktabs} 
\usepackage{multirow} 
\usepackage{colortbl}
\usepackage{hyperref}
\usepackage{url}
\usepackage{caption}
\usepackage{subcaption}
\usepackage{wrapfig}
\usepackage{enumitem}
\usepackage[textsize=tiny]{todonotes}

\newcommand{\method}{\textsc{CausalEmbed}\xspace}
\newcommand{\vidore}{ViDoRe}
\definecolor{CadetBlue}{HTML}{6D4E7E}
\definecolor{myred}{HTML}{DB432C}
\definecolor{myblue}{HTML}{427AB2}
\definecolor{mygreen}{HTML}{438870}
\definecolor{mypurple}{HTML}{B7617D}
\definecolor{myyellow}{HTML}{E4B112}
\definecolor{mygray}{HTML}{818181}
\definecolor{semgreen}{HTML}{56BA77}
\definecolor{mydarkred}{HTML}{982C2C}
\definecolor{tablepink}{RGB}{253, 231, 231}
\definecolor{tableyellow}{RGB}{254, 243, 215}
\definecolor{tableblue}{RGB}{222, 235, 246}
\newcommand{\red}[1]{$_{\color{RedOrange}\downarrow #1}$}

\definecolor{bittersweet}{rgb}{1.0, 0.44, 0.37}
\definecolor{mygreen}{rgb}{0.29, 0.7, 0.48}
\usepackage{pifont}
\definecolor{demphcolor}{RGB}{144,144,144}

\definecolor{mygray}{gray}{0.4}
\definecolor{autopurple}{HTML}{7030A0}
\definecolor{dyna_yellow}{HTML}{BF9000}
\definecolor{adaptive_blue}{HTML}{0070C0}
\definecolor{darksalmon}{rgb}{0.91, 0.59, 0.48}
\definecolor{emerald}{rgb}{0.31, 0.78, 0.47}
\definecolor{green(pigment)}{rgb}{0.0, 0.65, 0.31}
\definecolor{amaranth}{rgb}{0.9, 0.17, 0.31}
\definecolor{iris}{rgb}{0.35, 0.31, 0.81}
\definecolor{uu}{rgb}{0.95, 0.51, 0.51}
\definecolor{spirodiscoball}{rgb}{0.06, 0.75, 0.99}


\theoremstyle{plain}
\newtheorem{theorem}{Theorem}[section]

\theoremstyle{definition}

\theoremstyle{remark}

\title{\method: Auto-Regressive Multi-Vector Generation in Latent Space for Visual Document Embedding}

\author{\name Jiahao Huo \email jiahaohuotj@gmail.com \\
      \addr The Hong Kong University of Science and Technology (Guangzhou)\\
      Alibaba Cloud Computing\\
      University of Illinois Chicago
      \AND
      \name Yu Huang \\
      \addr The Hong Kong University of Science and Technology (Guangzhou)\\
      Alibaba Cloud Computing
      \AND
      \name Yibo Yan \\
      \addr The Hong Kong University of Science and Technology (Guangzhou)\\
      Alibaba Cloud Computing\\
      The Hong Kong University of Science and Technology
      \AND
      \name Ye Pan \\
      \addr The Hong Kong University of Science and Technology (Guangzhou)
      \AND
      \name Kening Zheng \\
      \addr University of Illinois Chicago
      \AND
      \name Wei-Chieh Huang \\
      \addr University of Illinois Chicago
      \AND
      \name Yi Cao \\
      \addr Alibaba Cloud Computing
      \AND
      \name Mingdong Ou\thanks{Project Leader.} \\
      \addr Alibaba Cloud Computing
      \AND
      \name Philip S. Yu \\
      \addr University of Illinois Chicago
      \AND
      \name Xuming Hu\thanks{Corresponding author.} \email xuminghu@hkust-gz.edu.cn \\
      \addr The Hong Kong University of Science and Technology (Guangzhou)\\
      The Hong Kong University of Science and Technology}

\begin{document}

\maketitle

\begin{abstract}
    Although Multimodal Large Language Models have shown remarkable potential in Visual Document Retrieval (VDR) through generating high-quality multi-vector embeddings, the substantial storage overhead caused by representing a page with thousands of visual tokens limits their practicality in real-world applications. To address this challenge, we propose an auto-regressive generation approach, \method, for constructing multi-vector embeddings. By incorporating iterative margin loss during contrastive training, \method encourages the embedding models to learn compact and well-structured representations. Our method enables efficient VDR tasks using only dozens of visual tokens, achieving a 30–155$\times$ reduction in token count while maintaining highly competitive performance across various backbones and benchmarks. Theoretical analysis and empirical results demonstrate the unique advantages of auto-regressive embedding generation in terms of training efficiency and scalability at test time. Consequently, \method introduces a flexible test-time scaling strategy for multi-vector VDR representations and sheds light on the generative paradigm within multimodal document retrieval. Our code is available at \href{https://anonymous.4open.science/r/CausalEmbed}{this URL}.
\end{abstract}
\section{Introduction}
Visual Document Retrieval (VDR)~\cite{barboule2025surveyvdr}, which focuses on retrieving relevant pages from extensive corpora of documents, is a cornerstone of modern information systems, ranging from enterprise search to domain-specific Retrieval-Augmented Generation (RAG) \cite{gao2025scaling,zheng2025retrieval,zhang2025roles}. Unlike traditional text-based retrieval systems that necessitate Optical Character Recognition (OCR) for content extraction~\cite{smith2007tesseract}, VDR treats document pages as visual entities~\cite{faysse2024colpali}. By formulating retrieval as a multimodal problem, VDR preserves critical structural and layout information that is often lost in text-only pipelines. Catalyzed by the rapid evolution of Multimodal Large Language Models (MLLMs), recent research~\cite{meng2025vlm2vec2, gunther2025jinav4} has pivoted toward leveraging these backbones to generate unified multimodal embeddings, building on their superior cross-modal reasoning and alignment capabilities. \par
\begin{wrapfigure}{r}{0.5\textwidth}
\centering
\includegraphics[width=0.48\textwidth]{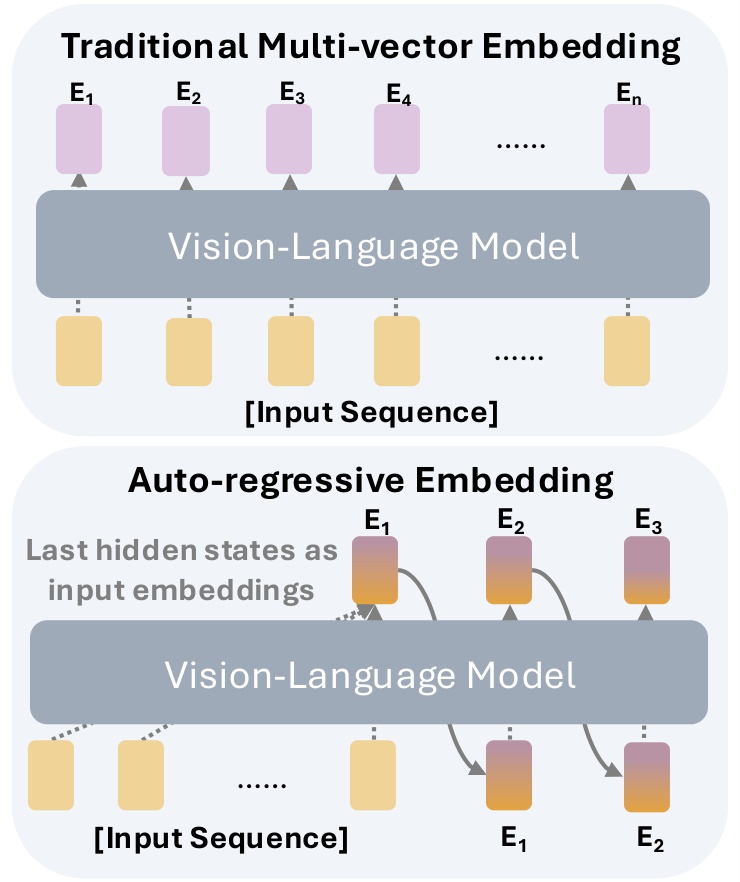}
\caption{Comparison of traditional multi-vector embeddings (\textit{e.g.,} ColPali/ColQwen \cite{faysse2024colpali}) with our auto-regressive paradigm for multi-vector generation in the VDR domain.}
\label{fig:main}
\end{wrapfigure}
Conventional single-vector VDR~\cite{jiang2024vlm2vec} encodes an entire document page and a corresponding query into a solitary vector, typically employing cosine similarity for retrieval. This paradigm has been further refined by contemporary models such as Eager Embed~\cite{EagerEmbed,meng2025vlm2vec2}, which utilizes the pre-trained backbones like Qwen3-VL~\cite{bai2025qwen3vl}. Nevertheless, these approaches are inherently constrained in their ability to represent content-dense pages, as a single embedding often fails to encapsulate the multifaceted visual complexity of a full visual page. To circumvent this, multi-vector VDR methods leverage patch-level representations. Pioneered by ColPali~\cite{faysse2024colpali}, this paradigm aligns patch-level visual tokens with textual query embeddings, treating each of the hundreds or thousands of visual tokens as a distinct representation of a document patch. This approach has demonstrated promising performance in both VDR~\cite{faysse2024colpali,mace2025vidore2} and general multimodal retrieval tasks~\cite{jiang2024vlm2vec,meng2025vlm2vec2}. The efficacy of multi-vector retrieval has been further validated by models such as EvoQwen2.5-VL-Retriever~\cite{EvoQwen}, tomoro-colqwen3~\cite{huang2025tomoro}, and ColNomic~\cite{nomicembedmultimodal2025}, which integrates hard negative mining and large-scale data curation, as well as by explorations into diverse backbones like Llama 3.2~\cite{grattafiori2024llama3} and Gemma 3~\cite{team2025gemma3} through Llama-Nemoretriever-Colembed~\cite{xu2025llama} and ColNetraEmbed~\cite{kolavi2025colnetraembed}.\par
Despite their empirical success, multi-vector representations impose a formidable storage burden, often requiring hundreds or even thousands of vectors per page~\cite{yan2025docpruner}, which precludes their scalability in production environments. Consequently, recent research has focused on compressing these multi-vector representations. For instance, MetaEmbed~\cite{xiao2025metaembed} employs matryoshka representation learning~\cite{kusupati2022matryoshka} to facilitate flexible token counts during inference. However, such methods often require a fixed token budget during training, limiting adaptability. Other strategies, such as Light-ColPali~\cite{ma2025lightcolpali} and DocPruner~\cite{yan2025docpruner}, utilize clustering or merging techniques for token pruning. While these methods achieve high compression ratios by identifying salient tokens, \textit{their performance is fundamentally capped by the quality of the original dense representations and frequently results in non-negligible retrieval accuracy loss}.\par
This raises a fundamental question: can the generative prowess of MLLMs be harnessed to produce multi-vector embeddings while simultaneously mitigating the substantial redundancy inherent in patch-level representations? We posit that such an objective is attainable. A compelling direction involves auto-regressive representation generation, a methodology that has demonstrated significant effectiveness in natural language generation~\cite{brown2020gpt}, visual entity synthesis~\cite{tian2024var}, and even explicit latent reasoning~\cite{hao2024coconut}. Nevertheless, the potential of the auto-regressive paradigm for embedding tasks remains largely underexplored, notwithstanding preliminary efforts~\cite{cui2025tte,lan2025ume,liu2025reasoning,tsai2025giecse} that incorporate textual reasoning trajectories prior to mutlimodal encoding or iteratively refine single-vector text embeddings via latent reasoning.\par
\textit{In this work, we endeavor to transform the current encoding landscape by generating multi-vector document embeddings in a sequential, auto-regressive manner}, a framework we designate as \method. Specifically, we fine-tune a pre-trained MLLM to synthesize latent representations in a token-wise fashion, as illustrated in Figure \ref{fig:main}. Relative to ColPali-style architectures, our approach realizes a 30$\times$ token compression ratio while simultaneously surpassing the performance of clustering-based baselines at identical compression scales. Extensive empirical evaluations validate the generalizability of the proposed method and reveal its distinctive advantages in test-time scaling. Furthermore, our results demonstrate the superior efficiency of the auto-regressive paradigm in distilling dense and high-level semantics compared to traditional spatial-grid representations. 
\textbf{The key contributions of this paper are as follows:}
\begin{itemize}[noitemsep, topsep=0pt]
    \item[\ding{182}] We introduce a novel auto-regressive paradigm for visual document embedding that generates latent multi-vector representations sequentially, thereby departing from the conventional parallel patch-based encoding.
    \item[\ding{183}] Our method demonstrates superior performance over pruning-based baselines under stringent compression regimes, successfully achieving an optimal trade-off between storage efficiency and retrieval accuracy.
    \item[\ding{184}] We identify a test-time scaling characteristic inherent to auto-regressive embeddings, where retrieval precision can be dynamically adjusted by modifying the number of generated tokens during the inference phase.
    \item[\ding{185}] Our approach exhibits robust generalizability across diverse backbones. When applied to underperformed models such as PaliGemma, our method yields a 14.6\% performance uplift while utilizing 30$\times$ fewer tokens than the full-resource ColPali baseline.
\end{itemize}
\section{Related Work}
\subsection{Multimodal Embedding}
Multimodal embedding focuses on representing heterogeneous inputs into a shared latent space for cross-modal understanding and retrieval~\cite{zhang2025mmretrieval_survey}. Built on the contrastive training paradigm proposed by~\cite{radford2021clip}, pre-trained models like CLIP~\cite{radford2021clip}, BLIP~\cite{li2022blip}, and SigLIP~\cite{zhai2023siglip} exhibit remarkable performance in multimodal encoding and modality alignment. Recently, pre-trained Multimodal Large Language Models (MLLMs) have also been adapted for multimodal embedding tasks, utilizing backbones that are more powerful for multimodal understanding. For instance, VLM2Vec~\cite{jiang2024vlm2vec} adapts Phi-3.5-V~\cite{abdin2024phi3}, LLaVE~\cite{lan2025llave} fine-tunes LLaVA-OneVision~\cite{li2024llavaov}, while VLM2Vec-V2~\cite{meng2025vlm2vec2}, jina-embeddings-v4~\cite{gunther2025jinav4}, and Nomic Embed Multimodal~\cite{nomicembedmultimodal2025} are built upon the Qwen-VL series~\cite{wang2024qwen2vl,bai2025qwen25vl,bai2025qwen3vl}. Beyond these advancements, recent studies have also discussed innovative strategies regarding data scheduling~\cite{nomicembedmultimodal2025,jian2025rzenembed} and training pipelines~\cite{li2025coma}, providing further insights into the dynamics of multimodal embedding training. Furthermore, combining embedding models with reasoning ability has been explored in text embedding~\cite{tsai2025giecse} and multimodal retrieval~\cite{lan2025ume,yu2025cafe,cui2025think}, an area that is still in its early stages.

\subsection{Visual Document Retrieval}
VDR has evolved significantly from traditional \textbf{OCR-based pipelines}, which \textit{often lose critical layout and structural informatio}n during text extraction \cite{most2025lost,zhang2025ocr}.
The advent of MLLMs first prompted a shift towards \textbf{single-vector} representations, but these are inherently \textit{constrained in their ability to capture the fine-grained details} of content-dense pages \cite{ma2024unifying,zhang2024gme,liu2025any}. 
Consequently, \textbf{multi-vector} methods, epitomized by ColPali \citep{faysse2024colpali}, have become state-of-the-art, achieving superior performance by representing pages as hundreds or thousands of patch-level embeddings \cite{gunther2025jinav4,ops_colqwen3_4b,jayaram2024muvera,huang2025tomoro}. 
This approach, however, introduces a critical efficiency bottleneck due to \textit{prohibitive storage overhead} \cite{khattab2020colbert,santhanam2022colbertv2,cha2026reinpool}.
To mitigate this, recent research has explored three primary optimization paradigms: (i) \textbf{pruning} methods like DocPruner \citep{yan2025docpruner}, which aim to identify and discard redundant embeddings post-encoding; 
(ii) \textbf{merging} approaches such as Light-ColPali \citep{ma2025towards}, which employ clustering techniques to aggregate similar embeddings into a more compact set; and 
(iii) strategies like MetaEmbed \citep{xiao2025metaembed}, which introduce a \textbf{fixed set of learnable tokens} whose final representations serve as a concise multi-vector set. 
While these methods focus on compressing or selecting from existing representations, they remain fundamentally \textit{limited by the initial encoding quality or a pre-defined token budget}.
In contrast, \method diverges from these compression-centric paradigms by reframing embedding creation as an auto-regressive generation process, synthesizing a compact and expressive set of latent vectors sequentially to achieve a more optimal trade-off between efficiency and performance.
\subsection{Autoregressive Generation}\label{app:autoregressive_generation}
Autoregressive generation is the cornerstone paradigm for modern, high-performing LLMs~\cite{achiam2023gpt, team2023gemini, guo2025deepseek}. While this paradigm dominates natural language generation, several works have begun to explore its adoption in computer vision~\cite{sun2024autoregressive, tian2024var}. For example, LlamaGen~\cite{sun2024autoregressive} directly applies the original \textit{next-token} prediction paradigm of large language models to the visual generation domain. Meanwhile, VAR~\cite{tian2024var} introduces a coarse-to-fine \textit{next-scale (resolution)} prediction strategy, enabling efficient learning of visual distributions and strong generalization capabilities in autoregressive transformers.
Moreover, recent research has increasingly focused on building unified models that combine both understanding and generation~\cite{wu2025janus, chen2025janus, liu2025javisgpt}, facilitating end-to-end autoregressive generation for both visual and textual modalities. Beyond general generation tasks, some works have explored combining the generative paradigm with text and multimodal embeddings~\cite{cui2025tte, tsai2025giecse, lan2025ume, liu2025reasoning, lin2025causal2vec, zhou2024vista}. For instance,~\citet{tsai2025giecse} iteratively refines single-vector text embeddings through latent reasoning, drawing inspiration from the latent reasoning paradigm introduced in COCONUT~\cite{hao2024coconut}.
\section{Preliminary}
\par\textbf{Notations.}
We consider a multimodal embedding system $\mathcal{M}$ that maps a visual document page $I$ and a textual query $T$ into a shared latent space via a sequential generation process. Concretely, let $I \in \mathcal{I}$ denote a visually rendered document page, and let $T \in \mathcal{T}$ denote a tokenized text query. The system is defined as
\begin{equation}
\mathcal{M}=\langle f_{\theta}(\cdot),\, \mathcal{H},\, \mathcal{Z},\, \mathcal{S},\, \mathcal{T},\, \mathcal{I}\rangle,
\end{equation}
where $f_{\theta}$ denotes a vision--language model parameterized by $\theta$, consisting of a vision encoder $\Phi$ and an auto-regressive language model $\Psi$, i.e., $\theta \triangleq \{\Phi,\Psi\}$.
The shared latent space is denoted by $\mathcal{H}$, while $\mathcal{Z}\subset\mathbb{R}^{D}$ is an unconstrained embedding space of dimension $D$.
The similarity function $\mathcal{S}$ specifies the scoring mechanism, $\mathcal{S}:\mathcal{T}\times\mathcal{I}\rightarrow \mathbb{R}$.
\par\textbf{Forward-based Multi-vector Embeddings.}
Following \citet{faysse2024colpali}, multi-vector embedding models typically leverage a pre-trained vision--language model to produce patch-/token-level representations for document pages and textual queries. Given a visual page $I$, the vision encoder $\Phi$ projects it into the latent space $\mathcal{H}$ as a sequence of visual features:
\begin{equation}
\mathbf{H}^{(v)}=\Phi(I)=\left[\mathbf{h}^{(v)}_{1},\ldots,\mathbf{h}^{(v)}_{N_v}\right],
\end{equation}
where $N_v$ is the length of the resulting visual feature sequence. These features are then fed into the language model $\Psi$ to produce the document embedding sequence:
\begin{equation}
\mathbf{D}=\Psi\!\left(\mathbf{H}^{(v)}\right)=\left[\mathbf{d}_{1},\ldots,\mathbf{d}_{N_v}\right].
\end{equation}
Similarly, for a textual query $T=[T_1,\ldots,T_{t}]$, the language model produces a query embedding sequence
\begin{equation}
\mathbf{Q}=\Psi(T)=\left[\mathbf{q}_{1},\ldots,\mathbf{q}_{N_t}\right],
\end{equation}
where $N_q$ is the number of query embedding vectors. We denote the above processes using a unified forward function:
\begin{equation}
\mathbf{D}_{\mathfrak f}=f_{\theta}(I), \qquad \mathbf{Q}_{\mathfrak f}=f_{\theta}(T).
\end{equation}
\par\textbf{Late Interaction Mechanism.}
To capture fine-grained semantic alignment, the multi-vector embedding paradigm employs a late-interaction scoring function $\mathcal{S}(T,I)$, which aggregates local correspondences between the query and document embedding sequences. In particular, the max-sim aggregation is widely applied as 
\begin{equation}
\mathcal{S}(T,I)=\sum_{i=1}^{N_q}\max_{j=1}^{N_d}\left\langle \mathbf{Q}_{i},\,\mathbf{D}_{j}\right\rangle,
\end{equation}
where $\langle\cdot,\cdot\rangle$ denotes the dot product and $N_d$ is the number of document embedding vectors.
\par\textbf{Contrastive Alignment.}
We train the model such that relevant query-document pairs receive higher late-interaction scores than non-relevant pairs. Given an in-batch set $\mathcal{B}=\{(T_k,I_k)\}_{k=1}^{b}$, where $I_k$ is the ground-truth page for query $T_k$, we use a contrastive objective that compares the positive score $s_k^{+}\triangleq \mathcal{S}(T_k,I_k)$ against the hardest in-batch negative $s_k^{-}\triangleq \max_{l\in\{1,\ldots,b\},\,l\neq k}\mathcal{S}(T_k,I_l)$.
The training loss is then defined as
\begin{equation}\label{eq:loss_m}
\mathcal{L}_{m}=\frac{1}{b}\sum_{k=1}^{b}\log\left(1+\exp\left(s_k^{-}-s_k^{+}\right)\right).
\end{equation}
\section{\method: Efficient Auto-regressive Multi-vector Generation}
\begin{figure*}[t]
\centering
\includegraphics[width=0.98\textwidth]{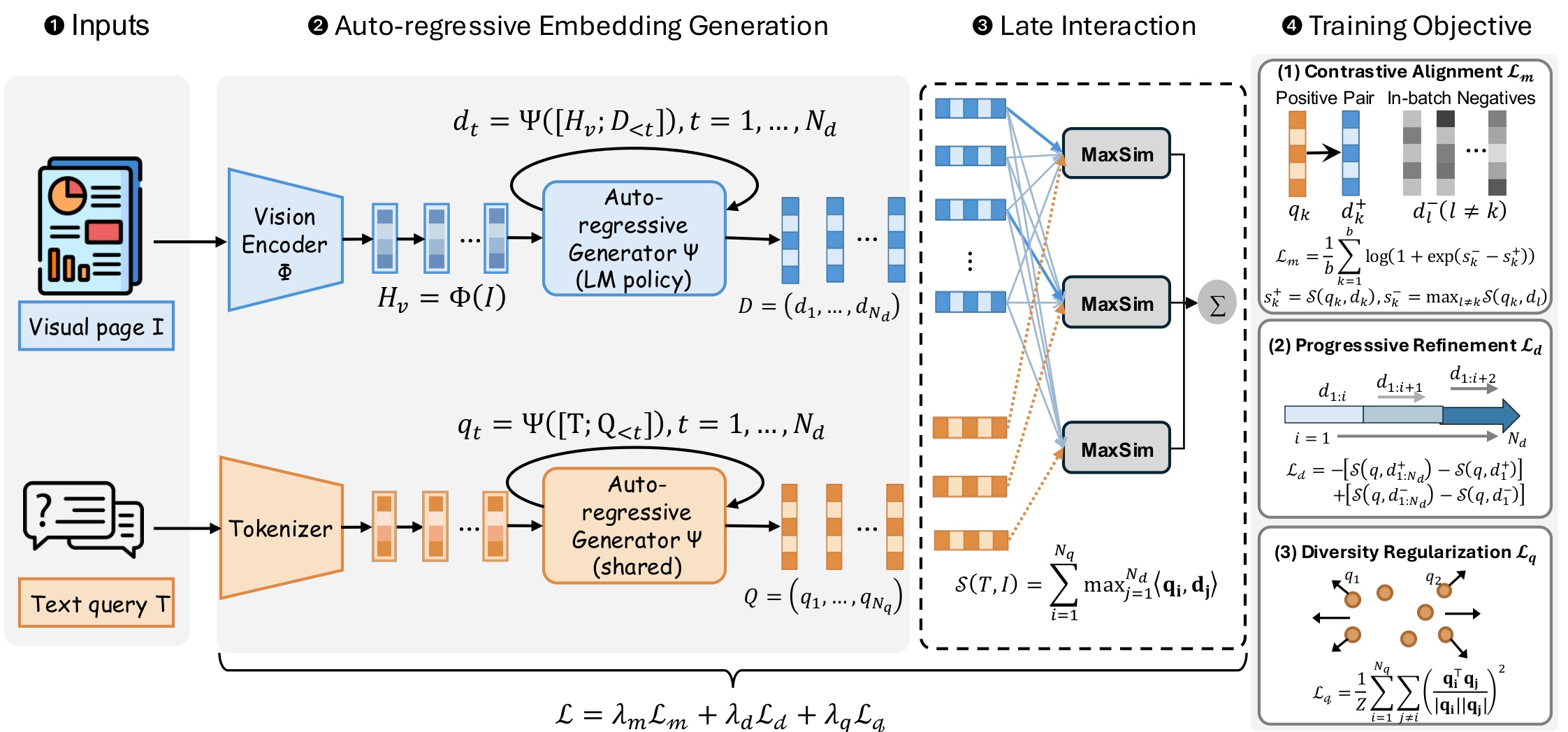}
\caption{Overview of our overall framework.}
\label{fig:frame}
\end{figure*}
\par\textbf{Auto-regressive Generative Embedding}\label{sec:auto}
The core of \method lies in transcending fixed-size representations by learning a dynamic generation policy.
Formally, let $\mathbf{C}$ denote the initial context embeddings derived from the input, where $\mathbf{C} = \mathbf{H}^{(v)}$ for visual pages or $\mathbf{C} = T$ for textual queries.
We aim to generate a sequence of latent vectors $\mathbf{Z} = [\mathbf{z}_1, \ldots, \mathbf{z}_{L}]$, where $L \in \{N_d, N_q\}$ denotes the target embedding budget.
Let $\mathbf{z}_0$ be the hidden state sequence obtained from the initial pass $\Psi(\mathbf{C})$.
For each step $k \in \{1, \ldots, L\}$, the model $\Psi$ conditions on the original input and the history of previously generated latent states to produce the next embedding:
\begin{equation}
\mathbf{z}_k = \tau\left(\Psi\left(\left[\mathbf{C}, \mathbf{z}_1, \ldots, \mathbf{z}_{k-1}\right]\right)\right),
\end{equation}
where $[\cdot]$ denotes sequence concatenation along the temporal dimension, and $\tau(\cdot)$ extracts the hidden state corresponding to the final token of the sequence. We further denote this process using a unified generation function:
\begin{equation}
\mathbf{D}_{\mathfrak{g}} = g_{\theta}(I), \qquad \mathbf{Q}_{\mathfrak{g}} = g_{\theta}(T).
\end{equation}
\subsection{Rethinking Multi-vector Embeddings from the Perspective of Gradient Flow}\label{sec:grad}
\par\textbf{Problem Setup.}
We now delve deeper into the training dynamics of multi-vector embeddings through the lens of gradient flow. Consider a triplet sample $(T, I^+, I^-)$ with corresponding embedding sequences $(Q, D^+, D^-)$.
Let
\begin{equation}
j^*(i;Q,D) \triangleq \arg\max_{j}\langle q_i, d_j\rangle
\end{equation}
denote the index of the best-matched document token for query token $q_i$. The late-interaction score can be written as:
\begin{equation}
S(Q,D) = \sum_{i} q_i^{\top} d_{j^*(i;Q,D)}.
\end{equation}
The pairwise contrastive loss is defined as
$\mathcal{L} = \log\left(1 + \exp\left(S^- - S^+\right)\right)$,
where $S^+ = S(Q, D^+)$ and $S^- = S(Q, D^-)$.
Define the error signal as
\[
\delta \triangleq \frac{\partial \mathcal{L}}{\partial S^+}
= -\frac{\partial \mathcal{L}}{\partial S^-}
= \sigma(S^- - S^+) - 1,
\]
where $\sigma(\cdot)$ is the sigmoid function.
\par\textbf{Forward-based Patch-level Learning.}
Let $Q_{\mathfrak{f}} = \{q_1, \dots, q_{N_t}\}$ and $D_{\mathfrak{f}} = \{d_1, \dots, d_{N_v}\}$. For patch-level multi-vector embeddings, the gradient is:
\begin{equation}\label{eq:delta_f}
\nabla_{\theta}\mathcal{L}_{\mathfrak{f}} = \delta \cdot \left(\nabla_{\theta} S^+ - \nabla_{\theta} S^-\right),
\end{equation}
where
\begin{equation}\label{eq:delta_s}
\begin{aligned}
\nabla_{\theta} S(Q,D)
&= \sum_{i=1}^{N_t} \left( \frac{\partial q_i}{\partial \theta} \cdot d_{j^*(i;Q,D)} + \frac{\partial d_{j^*(i;Q,D)}}{\partial \theta} \cdot q_i \right).
\end{aligned}
\end{equation}
Here, $q_i$ and $d_j$ are generated in parallel by $f_{\theta}(\cdot)$.
Plugging Eq.~\eqref{eq:delta_s} into Eq.~\eqref{eq:delta_f}, we get:
\begin{equation}\label{eq:grad_f}
\begin{aligned}
\nabla_{\theta}\mathcal{L}_{\mathfrak{f}} = \delta \cdot \sum_{i=1}^{N_t}\Big(
&\underbrace{\frac{\partial q_i}{\partial \theta} \cdot \left(d^{+}_{j^*(i;Q,D^+)} - d^{-}_{k^*(i;Q,D^-)}\right)}_{\text{Query update } \mathfrak{Q}} + \\ &\underbrace{\frac{\partial d^{+}_{j^*(i;Q,D^+)}}{\partial \theta} \cdot q_i}_{\text{PosDoc update } \mathfrak{D}^{+}} 
- \underbrace{\frac{\partial d^{-}_{k^*(i;Q,D^-)}}{\partial \theta} \cdot q_i}_{\text{NegDoc update } \mathfrak{D}^{-}} 
\Big),
\end{aligned}
\end{equation}
where $k^*(i;Q,D^-) \triangleq \arg\max_{j} \langle q_i, d^{-}_j \rangle$, and $\frac{\partial d}{\partial \theta}$ denotes the Jacobian.
This decomposition suggests that each query token contributes to parameter updates via $\partial q_i/\partial \theta$, while only the selected document tokens $d^{+}$ and $d^{-}$ receive direct gradient signals on the document side. Given that $N_t \ll N_v$ in practice, most document tokens are rarely selected, which can lead to low training efficiency (see in Appendix~\ref{app:comp}).
\par\textbf{Auto-regressive Multi-vector Learning.}
Let $Q_{\mathfrak{g}} = \{q_{1}, \dots, q_{N_q}\}$ and $D_{\mathfrak{g}} = \{d_{1}, \dots, d_{N_d}\}$ denote the generated query and document multi-vectors from \method.
To account for the recursive dependency introduced by auto-regressive generation, we adopt a gradient backpropagation view:
\begin{equation}\label{eq:bptt}
\frac{d d_t}{d \theta}
= \frac{\partial d_t}{\partial \theta}
+ \sum_{k < t} \frac{\partial d_t}{\partial d_k} \cdot \frac{d d_k}{d \theta},
\end{equation}
where $\frac{\partial d_t}{\partial d_k}$ captures historical dependencies. Then:
\begin{equation}
\begin{aligned}
\nabla_{\theta}\mathcal{L}_{\mathfrak{g}}
&= \delta \cdot \sum_{i=1}^{N_q} \Bigg[
\left(
\frac{\partial q_i}{\partial \theta}
+ \sum_{\tau < i} \frac{\partial q_i}{\partial q_{\tau}} \cdot \frac{d q_{\tau}}{d\theta}
\right) \cdot \Delta d_i + \\ &q_i^{\top} \cdot \frac{d d^{+}_{j^*(i;Q,D^+)}}{d\theta}
- q_i^{\top} \cdot \frac{d d^{-}_{k^*(i;Q,D^-)}}{d\theta}
\Bigg],
\end{aligned}
\end{equation}
where $\Delta d_i \triangleq d^{+}_{j^*(i;Q,D^+)} - d^{-}_{k^*(i;Q,D^-)}$.
Compared with Eq.~\eqref{eq:grad_f}, \method enables each token to depend on the full preceding context, allowing gradients to propagate through the entire generation chain and resulting in higher training efficiency. Detailed theoretical analysis can be found in Appendix~\ref{app:analysis}. 
\subsection{Overall Training Objective}\label{sec:obj}
Although the theoretical analysis in Section~\ref{sec:grad} highlights the advantages of auto-regressive multi-vector learning, directly applying Eq.~\ref{eq:loss_m} often leads to representation collapse due to unsupervised iteration process. To address this, we introduce regularization terms on both the document and query sides to promote progressive refinement and representational diversity.
\noindent\par\textbf{\ding{171} Progressive Refinement Loss ($\mathcal{L}_d$)} enforces a marginal gain on the document embedding sequence. We assume that each additional auto-regressive step should help better retrieval, which can be formalized as a telescoping sum:
\begin{equation}
\mathcal{L}_d = -\left[ \mathcal{S}(q, d^+_{1:N_d}) - \mathcal{S}(q, d^+_{1}) \right] + \left[ \mathcal{S}(q, d^-_{1:N_d}) - \mathcal{S}(q, d^-_{1}) \right].
\end{equation}
This term encourages the model to fully utilize the sequence length $N_d$ to enhance positive matches while distinguishing negative examples earlier in the sequence.
\noindent\par\textbf{\ding{168} Diversity Regularization ($\mathcal{L}_q$)} serves as an anti-homogenization constraint. To prevent the auto-regressive generator from collapsing into repetitive patterns, we penalize the cosine similarity between distinct tokens in the generated query sequence:
\begin{equation}
\mathcal{L}_q = \frac{1}{Z} \sum_{i=1}^{N_q} \sum_{j \neq i} \left( \frac{\mathbf{q}_i^\top \mathbf{q}_j}{\|\mathbf{q}_i\| \|\mathbf{q}_j\|} \right)^2,
\end{equation}
where $Z$ is the total number of off-diagonal token pairs.
\noindent\par\textbf{Final Objective.}
The total training loss of \method combines the components as:
\begin{equation}
\label{eq:loss}
\mathcal{L} = \lambda_{m}\mathcal{L}_m + \lambda_{d}\mathcal{L}_d + \lambda_{q}\mathcal{L}_q,
\end{equation}
where we use $\lambda_{m} = 1$ and $\lambda_{d} = \lambda_{q} = 0.1$ by default throughout our experiments.
\section{Experiment}
\subsection{Experiment Setup}
\newcommand{\up}{\rlap{\textsuperscript{\textcolor{mydarkred}{$\uparrow$}}}}
\begin{table*}[t!]
\centering
\caption{nDCG@5 of dferent methods on \vidore~V3 and V2 benchmarks. \textbf{Bold} and \underline{underlined} denote the best and second-best results among compression methods. \colorbox{tableyellow}{Yellow} and \colorbox{tableblue}{blue} cells represent trainable and our proposed methods respectively. \textcolor{red}{$\uparrow$} indicates higher values are better; while \textsuperscript{\textcolor{mydarkred}{$\uparrow$}} indicates performance surpassing the full-token base model.}
\begin{adjustbox}{width=0.98\textwidth,center}
\begin{tabular}{l|c|ccccccccc|ccccc}
\toprule
\multirow{2}{*}{\textbf{Method}} & \multirow{2}{*}{\textbf{Token}}
& \multicolumn{9}{c|}{\textbf{\vidore~V3}} & \multicolumn{5}{c}{\textbf{\vidore~V2}} \\
\cline{3-16}
& & \textbf{HR} & \textbf{Fin-E} & \textbf{Ind.} & \textbf{Phar.} & \textbf{C.S.} & \textbf{Ener.} & \textbf{Phys.} & \multicolumn{1}{c|}{\textbf{Fin-F}} & \textbf{Avg (\textcolor{red}{$\uparrow$})} & \textbf{ESG} & \textbf{Bio} & \textbf{Econ} & \multicolumn{1}{c|}{\textbf{ESG-H}} & \textbf{Avg (\textcolor{red}{$\uparrow$})} \\
\midrule
\multicolumn{16}{c}{\textbf{ColQwen2.5}} \\
\midrule
\rowcolor{black!10}\cellcolor{tableyellow}Base & 4962 & 0.473 & 0.500 & 0.416 & 0.561 & 0.686 & 0.571 & 0.434 & 0.375 & 0.502 & 0.549 & 0.591 & 0.544 & 0.620 & 0.576 \\
Random & 32 & 0.248 & 0.250 & 0.203 & 0.384 & 0.492 & 0.348 & 0.338 & 0.144 & 0.301 & 0.293 & 0.425 & 0.354 & 0.304 & 0.344 \\
SemCluster & 32 & 0.359 & \underline{0.366} & \underline{0.307} & \underline{0.481} & \underline{0.594} & 0.453 & \textbf{0.390} & \underline{0.267} & \underline{0.402} & \underline{0.426} & \textbf{0.515} & 0.485 & 0.395 & 0.455 \\
K-Means & 32 & \underline{0.365} & 0.360 & 0.309 & 0.467 & 0.579 & \textbf{0.464} & 0.383 & 0.247 & 0.397 & 0.413 & 0.504 & \textbf{0.530} & 0.359 & 0.452 \\
1D-Pooling & 32 & 0.260 & 0.283 & 0.237 & 0.404 & 0.561 & 0.397 & 0.379 & 0.186 & 0.338 & 0.251 & 0.472 & 0.417 & 0.261 & 0.350 \\
\cellcolor{tableyellow}BiQwen2 & 1 & 0.336 & 0.333 & 0.236 & 0.460 & 0.550 & 0.399 & 0.383 & 0.200 & 0.362 & 0.413 & 0.487 & 0.493 & \underline{0.504} & \underline{0.464} \\
\rowcolor{black!20} \cellcolor{tableblue}CausalQwen & 32 & \textbf{0.429} & \textbf{0.395} & \textbf{0.327} & \textbf{0.483} & \textbf{0.643} & \underline{0.460} & \underline{0.390} & \textbf{0.285} & \textbf{0.426} & \textbf{0.498} & \underline{0.505} & \underline{0.522} & \textbf{0.537} & \textbf{0.516} \\
\midrule
\multicolumn{16}{c}{\textbf{ColPali}} \\
\midrule
\rowcolor{black!10}\cellcolor{tableyellow}Base & 1031 & 0.321 & 0.232 & 0.228 & 0.391 & 0.488 & 0.248 & 0.248 & 0.144 & 0.288 & 0.350 & 0.433 & 0.397 & 0.406 & 0.397 \\
Random & 32 & 0.161 & 0.127 & 0.119 & 0.239 & 0.301 & 0.148 & 0.152 & 0.060 & 0.163 & 0.230 & 0.272 & 0.337 & 0.191 & 0.258 \\
SemCluster & 32 & \underline{0.251} & \underline{0.165} & \underline{0.164} & \underline{0.319} & \underline{0.390} & \underline{0.236} & 0.188 & \underline{0.102} & \underline{0.227} & 0.291 & \underline{0.366} & \underline{0.371} & \underline{0.271} & \underline{0.325} \\
K-Means & 32 & 0.230 & 0.163 & 0.158 & 0.308 & 0.380 & 0.233 & \underline{0.194} & 0.100 & 0.221 & \underline{0.299} & 0.354 & 0.334 & 0.243 & 0.308 \\
1D-Pooling & 32 & 0.204 & 0.108 & 0.125 & 0.223 & 0.339 & 0.111 & 0.134 & 0.053 & 0.162 & 0.163 & 0.286 & 0.353 & 0.216 & 0.255 \\
\cellcolor{tableyellow}BiPali & 1 & 0.197 & 0.104 & 0.100 & 0.182 & 0.372 & 0.150 & 0.175 & 0.071 & 0.169 & 0.155 & 0.259 & 0.369 & 0.216 & 0.261 \\
\rowcolor{black!20} \cellcolor{tableblue}CausalPali & 32 & \textbf{0.353}\up & \textbf{0.280}\up & \textbf{0.234}\up & \textbf{0.436}\up & \textbf{0.532}\up & \textbf{0.353}\up & \textbf{0.332}\up & \textbf{0.169}\up & \textbf{0.336}\up & \textbf{0.357}\up & \textbf{0.493}\up & \textbf{0.503}\up & \textbf{0.464}\up & \textbf{0.454}\up \\
\bottomrule
\end{tabular}
\end{adjustbox}
\label{tab:\vidore_v2&3}
\end{table*}
\par\textbf{Training Details} To evaluate the effectiveness of \method, we conduct experiments using various backbones of different sizes, including PaliGemma-3B~\cite{beyer2024paligemma} and Qwen2.5-VL-3B~\cite{bai2025qwen25vl}. Following the practice of ColPali~\cite{faysse2024colpali}, we train our model on the same training set\footnote{Training set: \href{https://huggingface.co/datasets/vidore/colpali_train_set}{vidore/colpali\_train\_set}. Pre-trained BiEncoders: \href{https://huggingface.co/vidore/biqwen2-v0.1}{vidore/biqwen2-v0.1} and \href{https://huggingface.co/vidore/bipali}{vidore/bipali}.} to ensure a fair comparison. Additional training details are provided in Appendix~\ref{app:train}. 
\par\textbf{Baselines} We compare against two main categories of pruning baselines, following~\cite{ma2025lightcolpali} and \cite{yan2025docpruner}: (i) multi-vector clustering methods, which include semantic clustering techniques such as K-Means~\cite{mcqueen1967kmeans}, Hierarchical Clustering~\cite{ward1963ward}, 1D pooling, and random selection; and (ii) single-vector methods, which train BiEncoder models with single-vector retrieval capabilities on similar architectures. In our experiments, we utilize the pre-trained BiQwen2 and BiPali models. For comprehensive reference, we also include the results of ColQwen2.5 and ColPali in our tables.
\par\textbf{Evaluation} We assess the general multimodal embedding capability of \method using three versions of the \vidore benchmark~\cite{faysse2024colpali, mace2025vidore2, loison2026vidore3}, and report standard nDCG@5 as the evaluation metric. \vidore~is widely adopted to benchmark visual document retrieval across various domains. Notably, \vidore~V2 addresses performance saturation by introducing more generalized settings and multilingual subsets, while the latest \vidore~V3 evaluates the model’s ability to retrieve accurate information from complex, visually-rich documents across diverse industrial contexts.
\subsection{Main Results}
\begin{table*}[t!]
\small
\centering
\caption{nDCG@5 of dferent methods on the \vidore~V1 benchmark. \textbf{Bold} and \underline{underlined} denote the best and second-best results.
}
\begin{adjustbox}{width=0.98\textwidth,center}
\begin{tabular}{l|c|cccccccccc|c}
\toprule
\multirow{2}{*}{\textbf{Method}} & \multirow{2}{*}{\textbf{Token}}
& \multicolumn{10}{c|}{\textbf{\vidore~V1}} & \multirow{2}{*}{\textbf{Avg}(\textcolor{red}{$\uparrow$})} \\
\cline{3-12}
& & \textbf{Arxiv} & \textbf{Doc} & \textbf{Info} & \textbf{TabF} & \textbf{TatD} & \textbf{Shift} & \textbf{Syn-AI} & \textbf{Syn-En} & \textbf{Syn-GR} & \textbf{Syn-HI} & \\
\midrule
\multicolumn{13}{c}{\textbf{ColQwen2.5}} \\
\midrule
\rowcolor{black!10}\cellcolor{tableyellow}Base & 5046 & 0.876 & 0.622 & 0.931 & 0.874 & 0.805 & 0.847 & 0.983 & 0.960 & 0.955 & 0.993 & 0.885 \\
Random & 32 & 0.721 & 0.414 & 0.744 & 0.849 & \underline{0.685} & 0.488 & 0.854 & 0.867 & 0.838 & 0.844 & 0.730 \\
SemCluster & 32 & 0.832 & \underline{0.529} & 0.851 & \textbf{0.877} & \textbf{0.705} & 0.654 & \underline{0.933} & \underline{0.914} & 0.902 & 0.938 & \underline{0.813} \\
K-Means & 32 & \textbf{0.837} & 0.524 & \textbf{0.861} & \underline{0.866} & 0.672 & \underline{0.725} & 0.932 & \textbf{0.921} & 0.889 & 0.928 & \textbf{0.816} \\
1D-Pooling & 32 & 0.730 & 0.432 & 0.829 & 0.791 & 0.614 & 0.625 & 0.883 & 0.873 & 0.873 & 0.931 & 0.758 \\
\cellcolor{tableyellow}BiQwen2 & 1 & \underline{0.833} & 0.516 & 0.829 & 0.833 & 0.661 & 0.723 & \textbf{0.940} & 0.861 & \underline{0.925} & \textbf{0.952} & 0.807 \\ 
\rowcolor{black!20} \cellcolor{tableblue}CausalQwen & 32 & 0.807 & \textbf{0.553} & \underline{0.857} & 0.825 & 0.620 & \textbf{0.737} & 0.927 & 0.907 & \textbf{0.935} & \underline{0.941} & 0.811 \\
\midrule
\multicolumn{13}{c}{\textbf{ColPali}} \\
\midrule
\rowcolor{black!10}\cellcolor{tableyellow}Base & 1031 & 0.586 & 0.506 & 0.735 & 0.710 & 0.546 & 0.467 & 0.923 & 0.826 & 0.861 & 0.894 & 0.705 \\
Random & 32 & 0.464 & 0.280 & 0.577 & 0.656 & 0.435 & 0.187 & 0.791 & 0.687 & 0.677 & 0.779 & 0.554 \\
SemCluster & 32 & \underline{0.540} & 0.410 & \underline{0.668} & 0.663 & \underline{0.463} & 0.367 & \underline{0.855} & \underline{0.740} & \underline{0.804} & 0.797 & \underline{0.631} \\
K-Means & 32 & 0.537 & \underline{0.412} & 0.652 & \underline{0.681} & 0.450 & \underline{0.382} & 0.762 & 0.692 & 0.769 & \underline{0.823} & 0.616 \\
1D-Pooling & 32 & 0.461 & 0.288 & 0.598 & 0.526 & 0.360 & 0.205 & 0.803 & 0.646 & 0.784 & 0.731 & 0.540 \\
\cellcolor{tableyellow}BiPali & 1 & 0.422 & 0.282 & 0.627 & 0.669 & 0.284 & 0.288 & 0.618 & 0.553 & 0.602 & 0.624 & 0.497 \\ 
\rowcolor{black!20} \cellcolor{tableblue}CausalPali & 32 & \textbf{0.746}\up & \textbf{0.492} & \textbf{0.787}\up & \textbf{0.832}\up & \textbf{0.520} & \textbf{0.614}\up & \textbf{0.871} & \textbf{0.851}\up & \textbf{0.904}\up & \textbf{0.888} & \textbf{0.750}\up \\
\bottomrule
\end{tabular}
\end{adjustbox}
\label{tab:\vidore_v1}
\end{table*}
Tables~\ref{tab:\vidore_v2&3} and \ref{tab:\vidore_v1} present a comparison of various methods across \vidore~V1 to V3. Our key findings are as follows:
\par\textbf{Existing Pruning-based Methods Struggle with the Performance–Efficiency Trade-off.} As shown in Table~\ref{tab:\vidore_v2&3}, pruning-based baselines generally compromise the performance of base models on visually-rich document retrieval tasks. On \vidore~V2, even the best-performing baselines incur an average performance drop of 19.4\% and 18.1\% on ColQwen2.5 and ColPali respectively. This gap widens further on the more challenging \vidore~V3, reaching 19.9\% and 21.2\%. The performance of clustering-based methods is highly correlated with representation quality, which shows significant degradation on ColPali. A similar trend is observed in Table~\ref{tab:\vidore_v1}, although the pruning loss is somewhat mitigated given each base model is fine-tuned on similar data sources for certain epochs.
\par\textbf{\method Achieves Competitive Performance under Extreme Compression Ratios.} Results in Tables~\ref{tab:\vidore_v2&3} and \ref{tab:\vidore_v1} demonstrate that \method outperforms almost all other approaches under equivalent compression constraints. On \vidore~V2 and V3, CausalQwen achieves nDCG@5 scores of 0.516 and 0.426, significantly surpassing the best-performing baselines, SemCluster and BiQwen2, respectively. On \vidore~V1, CausalQwen also achieves a highly competitive average nDCG@5 of 0.811, trailing the best baseline by only 0.005. Remarkably, when applying to PaliGemma, CausalPali even outperforms the full-source ColPali on all three \vidore~versions by nearly 5\%, further highlighting the strength of \method in generating high-quality document embeddings.
\par\textbf{\method Demonstrates Strong Generality and Robustness across Backbones and Benchmarks.}  
In contrast to previous methods that are often sensitive to the quality of the initial representations, \method shows consistent superiority across varying architectures and evaluation scenarios. Whether applied to the stronger Qwen2.5-VL or the comparatively weaker PaliGemma backbone, our proposed approach consistently surpasses traditional clustering and pooling methods under the same token budget. Moreover, the observed performance gains hold across different domains, from the academic-oriented \vidore~V1 to the more diverse and multilingual V2 and V3. This consistency suggests that our learned generative policy effectively captures universal visual semantics without overfitting to specific data distributions. These findings further validate \method as a robust and generalizable solution for efficient visual document retrieval.
\subsection{Test-time Scaling of \method}\label{sec:tts}
To investigate the influence of latent sequence length in \method, we conduct additional experiments on CausalQwen with varying length budgets during both training and inference across multiple domains, as illustrated in Figure~\ref{fig:scaling}. Surprisingly, we observe clear test-time scaling behaviors.
\textbf{(i) Performance consistently improves with longer sequence lengths.} The nDCG@5 score of CausalQwen increases steadily as the test-time token budget grows from 1 to 32, verifying the benefit of additional tokens in capturing fine-grained semantics. As the sequence length further increases, the performance gradually approaches the dotted-line upper bound of ColQwen.
\textbf{(ii) This trend generalizes across domains and languages.} Similar scaling tendencies are observed on both the multilingual Biomedical~V2 dataset and its English-only counterpart, as well as in industrial domains such as Energy and Artificial Intelligence. These results suggest that test-time scaling is a general property of \method rather than a domain-specific phenomenon.
\begin{wrapfigure}{r}{0.65\textwidth}
\centering
\includegraphics[width=0.64\textwidth]{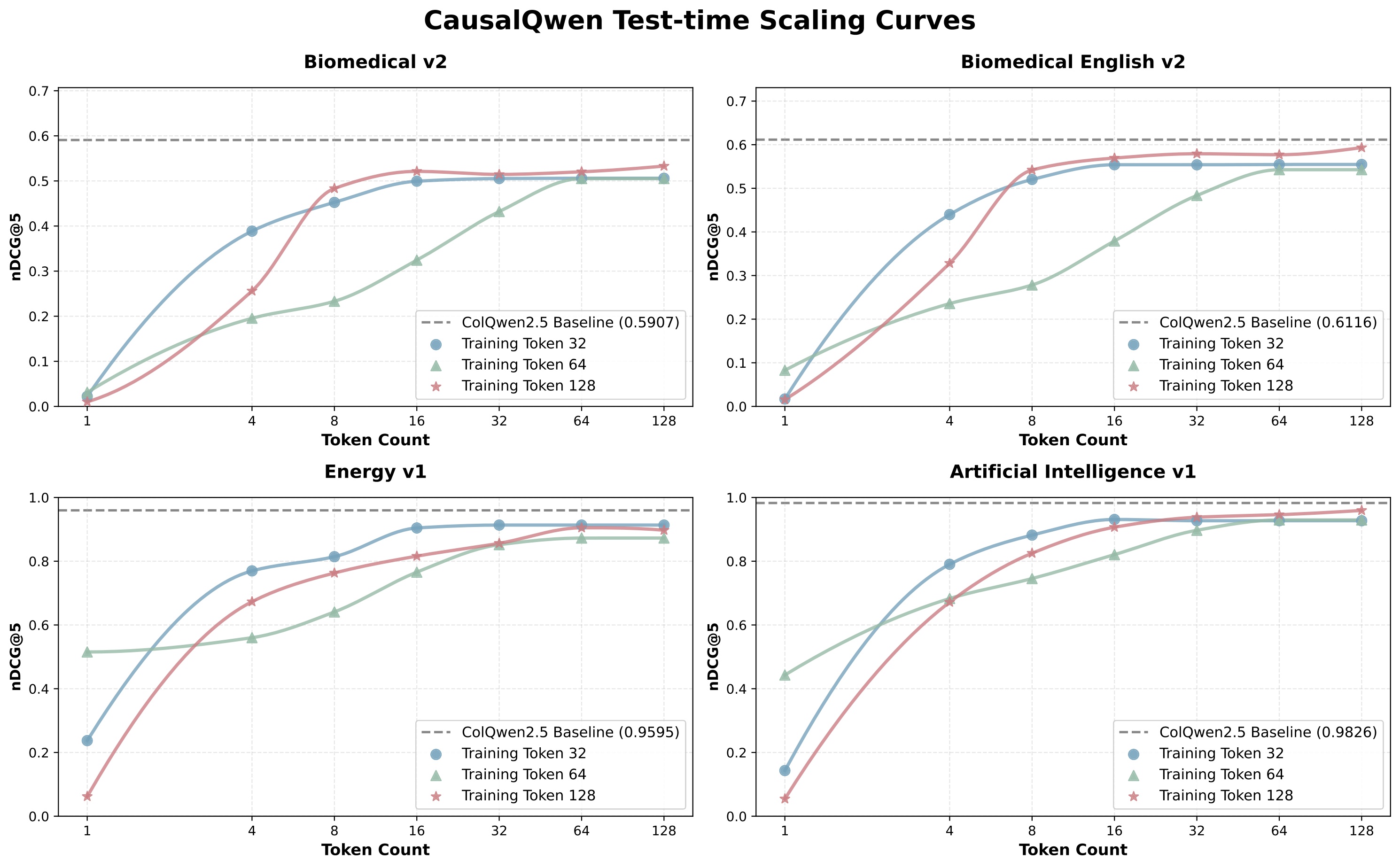}
\caption{Test-time scaling curves of \method. We train CausalQwen with sequence lengths of 32, 64, and 128, and evaluate its test-time performance under different inference budgets.}
\label{fig:scaling}
\end{wrapfigure}
\textbf{(iii) The model exhibits robustness beyond training-time configurations.} Although CausalQwen is trained with a fixed sequence length, its test-time performance remains stable under moderately shorter or longer inference budgets. Moreover, the performance follows a pattern reminiscent of \textbf{Matryoshka Representation}, where retrieval quality is progressively refined as the token length increases from 8 to the maximum budget. This property enables flexible deployment strategies, ranging from low-latency short sequences for fast inference to longer sequences for higher accuracy. We attribute this behavior to the auto-regressive generation mechanism and the inclusion of a progressive refinement term in the training objective, further demonstrating the practical value of \method in supporting adaptive inference-time trade-offs.
\subsection{Framework Analysis}\label{sec:exp_eff}
\par\textbf{Latency.}
\begin{figure*}[t]
    \centering
    \begin{minipage}{0.98\linewidth}
        \centering
\includegraphics[width=\linewidth]{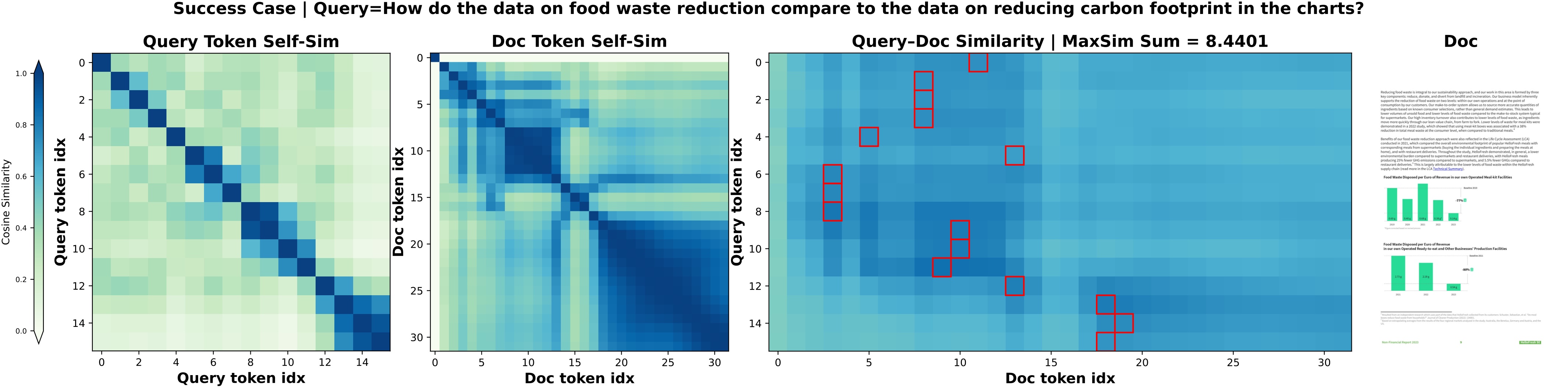}
\subcaption{Visualization of a successful case. For each query token, document token with highest score is highlighted with a red box.}
        \label{fig:success}
    \end{minipage}
    \vfill
    \begin{minipage}{0.98\linewidth}
        \centering \includegraphics[width=\linewidth]{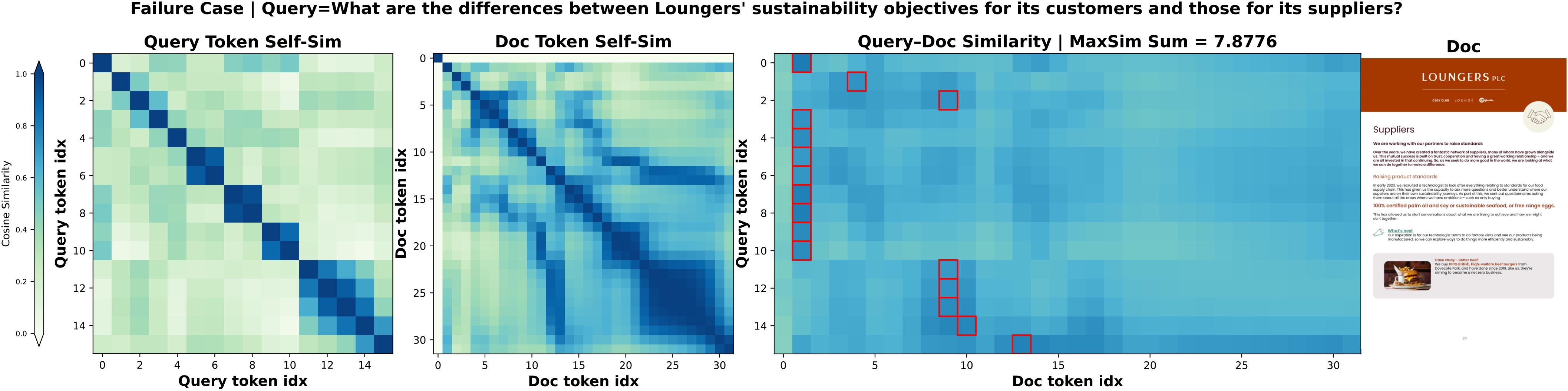}
\subcaption{Visualization of a failure case. For each query token, document token with highest score is highlighted with a red box.}
        \label{fig:fail}
    \end{minipage}
    \caption{Case study of CausalQwen on \vidore~V2. Heatmaps visualize the self-similarity within query and document sequences, as well as their cross-similarity.}
    \label{fig:viz}
\end{figure*}
A key concern of the auto-regressive paradigm is computational efficiency. Here, we evaluate the end-to-end latency of different document embedding methods. All methods incur a similar forward encoding cost $T_f$. Subsequently, \method generates latent token sequences auto-regressively benefiting from KV caching, whereas clustering-based baselines prune multi-vector embeddings to a fixed number of vectors, incurring additional adaptation latency $T_a$. We define the overall latency as $T = T_f + T_a$.
\begin{wrapfigure}{r}{0.6\textwidth}
\centering
\includegraphics[width=0.58\textwidth]{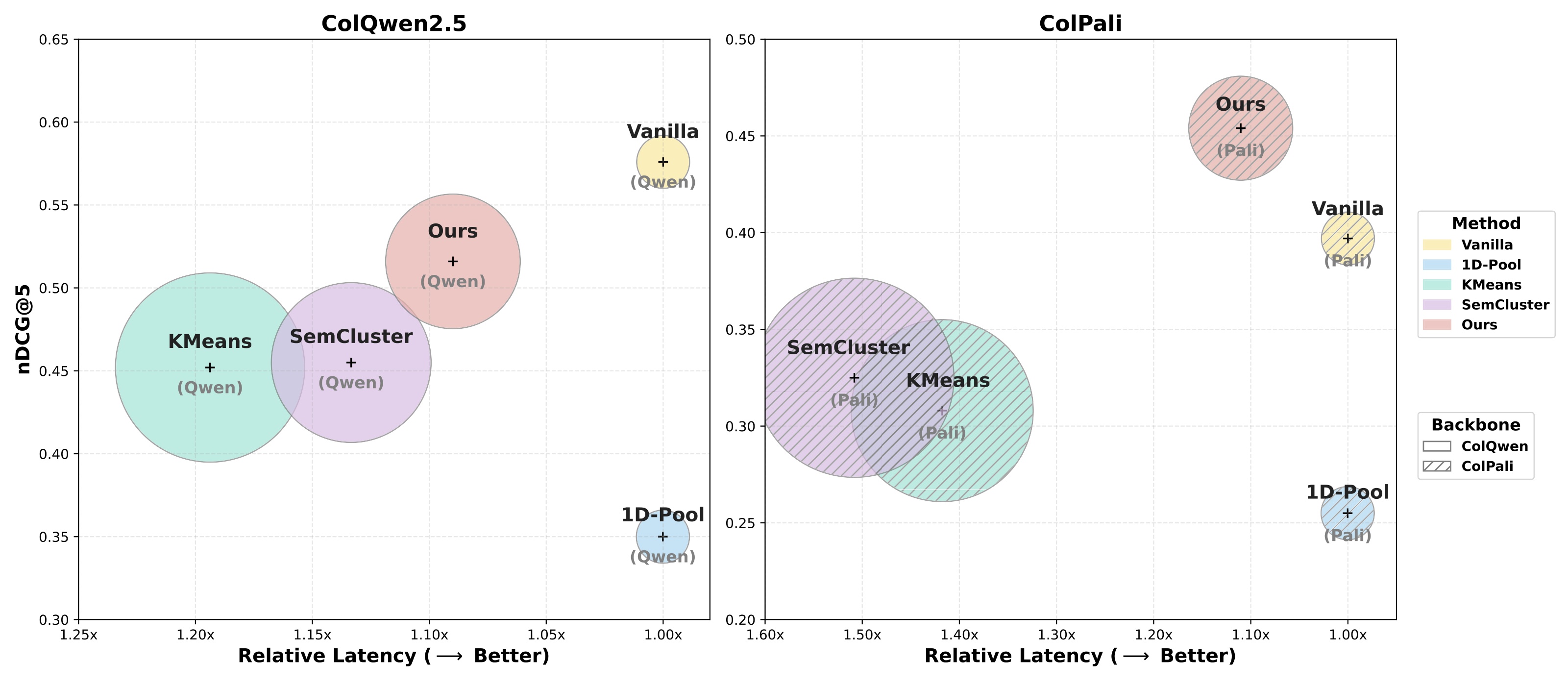}
\caption{Trade-off between retrieval performance and latency on \vidore~V2. Bubble size indicates adaptation overhead.}
\label{fig:bubble}
\end{wrapfigure}
Figure~\ref{fig:bubble} reports the latency-performance trade-off on \vidore~V2 with 32 tokens. The bubble size reflects the adaptation latency $T_a$ of each method. Compared with clustering-based baselines, \method achieves lower adaptation latency due to KV caching while delivering superior retrieval performance at the same compression ratio. A more detailed analysis is provided in Appendix~\ref{app:latency_breakdown}.
\par\textbf{Visualization.}
Figure~\ref{fig:viz} illustrates the decision-making process of CausalQwen on \vidore~V2. Consistent with the objectives described in Section~\ref{sec:obj}, CausalQwen produces diverse yet compact token-wise embeddings for both queries and documents. Notably, most document information is concentrated in the early tokens, which contribute the most to the final similarity score, while later tokens progressively capture finer-grained details, as reflected by the distribution of highlighted document tokens.\par
\subsection{Ablation Study}
\textbf{Query-Side Autoregressive Ablation.}
\label{app:query_ablation}
To investigate the necessity of query-side autoregressive generation, we conduct an ablation study comparing our full method (CausalQwen) with a variant that uses standard forward-pass encoding for queries while maintaining autoregressive generation exclusively for documents. Table~\ref{tab:query_ablation} presents the results on ViDoRe V1 datasets.

\begin{table}[h]
\centering
\caption{Ablation study on query-side autoregressive generation (NDCG@5). \textit{Query-Forward + Doc-AR} uses standard forward-pass encoding for queries and autoregressive generation only for documents. $\Delta$ denotes the performance change compared to CausalQwen (Ours). Negative values indicate performance degradation.}
\label{tab:query_ablation}
\renewcommand{\arraystretch}{1.1}
\resizebox{\linewidth}{!}{
\begin{tabular}{l|cc|cc|cc|cc|cc}
\toprule
\textbf{Method} & \textbf{ArxivQA} & $\Delta$ & \textbf{DocVQA} & $\Delta$ & \textbf{InfoVQA} & $\Delta$ & \textbf{TabFQuad} & $\Delta$ & \textbf{TAT-DQA} & $\Delta$ \\
\midrule
CausalQwen (Ours) & 0.807 & -- & 0.553 & -- & 0.857 & -- & 0.825 & -- & 0.620 & -- \\
Query-Forward + Doc-AR & 0.702 & \textcolor{red}{-0.105} & 0.472 & \textcolor{red}{-0.081} & 0.794 & \textcolor{red}{-0.063} & 0.707 & \textcolor{red}{-0.118} & 0.505 & \textcolor{red}{-0.115} \\
\midrule
\textbf{Method} & \textbf{ShiftProject} & $\Delta$ & \textbf{Syn-AI} & $\Delta$ & \textbf{Syn-Energy} & $\Delta$ & \textbf{Syn-Gov} & $\Delta$ & \textbf{Syn-Health} & $\Delta$ \\
\midrule
CausalQwen (Ours) & 0.737 & -- & 0.927 & -- & 0.907 & -- & 0.935 & -- & 0.941 & -- \\
Query-Forward + Doc-AR & 0.440 & \textcolor{red}{-0.297} & 0.856 & \textcolor{red}{-0.071} & 0.776 & \textcolor{red}{-0.131} & 0.871 & \textcolor{red}{-0.064} & 0.913 & \textcolor{red}{-0.028} \\
\bottomrule
\end{tabular}
}
\end{table}

As shown in Table~\ref{tab:query_ablation}, forcing the query into a standard forward vector leads to consistent performance degradation across all datasets. The most significant drops occur on ShiftProject ($-0.297$), Syn-Energy ($-0.131$), and TabFQuad ($-0.118$). This degradation occurs because representing a complex multimodal query with standard vectors often leads to knowledge conflict and representation bottlenecks. Autoregressively generating a few vectors for the query effectively mitigates this issue by allowing the model to capture diverse aspects of the query semantics.\par
\textbf{Loss Function Ablation.}
We conduct an ablation study on \vidore~V1 and \vidore~V2 to assess the contribution of each objective term in Eq.~\eqref{eq:loss}, with results summarized in Table~\ref{tab:ablation}. Removing $\mathcal{L}_m$ leads to an almost complete performance collapse on both benchmarks, highlighting its indispensable role in stabilizing training and establishing a meaningful embedding space. Disabling either the document-side objective $\mathcal{L}_d$ or the query-side objective $\mathcal{L}_q$ also results in consistent performance degradation as well. 
\begin{wraptable}{r}{0.5\textwidth}
\centering
\caption{Ablation study of four variants, each modifying a key component of CausalQwen.}
\label{tab:ablation}
\renewcommand{\arraystretch}{1.3}
\resizebox{0.98\linewidth}{!}{
\begin{tabular}{l|cc}
\hline
\par\textbf{Method Variant} & \par\textbf{\vidore~V1} & \par\textbf{\vidore~V2} \\
\hline
\rowcolor{tableblue!25} 
Ours (Full Model) & \par\textbf{81.1} & \par\textbf{51.6} \\
\midrule
w/o $\mathcal{L}_d$ & 69.7\,\red{14.1\%} & 43.6\,\red{15.5\%} \\
w/o $\mathcal{L}_q$ & 69.5\,\red{14.3\%} & 48.0\,\red{7.0\%} \\
w/o $\mathcal{L}_m$ & 0.4\,\red{99.5\%} & 1.2\,\red{97.7\%} \\
w/ $\text{ReLU}(\Delta \mathcal{S})$ & 63.3\,\red{21.9\%} & 36.5\,\red{29.3\%} \\
\hline
\end{tabular}
}
\vspace{-8mm}
\end{wraptable}
This indicates that explicitly encouraging marginal gains from longer document trajectories and mitigating representation homogenization can be critical for effective auto-regressive embedding generation. More detailed analysis of training efficiency can be found in Appendix~\ref{app:exp}.
\section{Conclusion}
Multi-vector multimodal embeddings have been widely adopted for visual document retrieval. However, their patch-level representations lead to significant storage and deployment overhead. In this paper, we propose a novel paradigm, \method, which generates multi-vector embeddings in an auto-regressive manner with significantly shorter sequence lengths. Extensive experiments demonstrate that \method outperforms pruning-based baselines, achieving superior performance at an extreme $30\times$ compression ratio. In-depth theoretical analysis and empirical results reveals the favorable test-time scaling behavior of \method, along with its advantages in training efficiency and inference latency, providing additional flexibility for real-world deployment. Ultimately, \method represents a critical step toward making generative embeddings in visual document retrieval more practical, scalable, and economically viable in real-world applications.


\clearpage
\bibliography{reference}
\bibliographystyle{tmlr}

\clearpage
\appendix
\section{Broader Impact Statement}  
\paragraph{Ethical Considerations.}  
We affirm that the development and deployment of the proposed \method framework raises no ethical concerns with regard to its motivation, algorithmic design, training process, or data usage. \method operates on publicly available benchmark datasets and emphasizes efficiency, interpretability, and generalization. Our method adheres to AI research best practices and contributes to sustainable and responsible development in multimodal information retrieval.
\paragraph{Societal Implications.}  
\method introduces a compact and adaptive auto-regressive paradigm for visual document retrieval, offering a substantial reduction in token count without sacrificing retrieval accuracy. This paradigm shift alleviates the storage and latency bottlenecks that hinder real-world deployment of multi-vector systems. By enabling efficient, high-quality retrieval in low-resource settings, \method can democratize access to multimodal document understanding across domains such as digital libraries, legal archives, healthcare, and education.
\section{Use of LLMs}
We used AI assistants for two purposes: (1) generating routine code and boilerplate functions, which were subsequently reviewed and debugged by humans, and (2) performing grammatical review and sentence-level editing of the manuscript. The research methodology, findings, and analysis were independently proposed and conducted.
\section{Additional Analysis and Proofs}\label{app:analysis}
\subsection{Analysis of Training Efficiency}\label{app:comp}
\begin{theorem}[Preceding-Token Coverage]
\label{thm:training-efficiency}
Under a MaxSim-based late-interaction objective, assume the index of the maximally matched document token for each query token is uniformly distributed over its valid range. 
Let $L^{\mathrm{forward}}$ and $L^{\mathrm{causal}}$ denote the total preceding-token coverage under forward-based and auto-regressive embedding paradigms, respectively. 
Then,
\begin{equation}
\begin{aligned}
\mathbb{E}[L^{\mathrm{forward}}] &= \frac{N_t N_v}{2}, \\
\mathbb{E}[L^{\mathrm{causal}}] &= N_q\!\left( N_v + \frac{N_d-1}{2} \right).
\end{aligned}
\end{equation}
Moreover, when $N_q \approx N_t$ and $N_d \ll N_v$, it holds that
\begin{equation}
\mathbb{E}[L^{\mathrm{causal}}] \gg \mathbb{E}[L^{\mathrm{forward}}].
\end{equation}
\end{theorem}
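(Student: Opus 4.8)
The plan is to read the quantity $L$ as the total number of tokens lying in the causal dependency set (equivalently, the receptive field) of the argmax-selected document tokens, accumulated over the query positions, and then to evaluate its expectation by linearity. The only probabilistic ingredient is the stated uniformity of the best-match index $j^{*}(i)$; everything else is deterministic bookkeeping of how many tokens precede the selected one in the sequence fed to $\Psi$. Concretely, I would write $L=\sum_{i}c\bigl(j^{*}(i)\bigr)$, where $c(\cdot)$ is the per-token coverage, and reduce both identities to computing $\mathbb{E}[c(j^{*})]$ for a single uniform index and then multiplying by the number of query tokens.

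For the forward case, the document embeddings $\mathbf{D}=\Psi(\mathbf{H}^{(v)})$ are produced in one pass, so the token $d_{j}$ emitted at position $j$ of the causal language model depends only on the visual features at positions $1,\dots,j$; its coverage is therefore $c^{\mathrm{forward}}(j)=j-1$. With $j^{*}(i)\sim\mathrm{Unif}\{1,\dots,N_v\}$ we get $\mathbb{E}[c^{\mathrm{forward}}(j^{*})]=\tfrac{N_v-1}{2}$, and summing the $N_t$ query positions by linearity yields $\mathbb{E}[L^{\mathrm{forward}}]=N_t\cdot\tfrac{N_v-1}{2}\approx\tfrac{N_t N_v}{2}$, absorbing the $O(N_t)$ correction into the approximation.

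The crucial structural difference for the causal case is that \method appends every generated latent after the complete $N_v$-token context, per the recursion $d_t=\tau(\Psi([\mathbf{C},d_1,\dots,d_{t-1}]))$ with $\mathbf{C}=\mathbf{H}^{(v)}$. Hence the selected token at generated position $j$ covers all $N_v$ context tokens together with its $j-1$ autoregressive predecessors, i.e. $c^{\mathrm{causal}}(j)=N_v+(j-1)$; this is exactly the dependency structure made explicit by the BPTT expansion in Eq.~\eqref{eq:bptt}, where the gradient of $d_t$ flows through every $d_k$ with $k<t$. With $j^{*}\sim\mathrm{Unif}\{1,\dots,N_d\}$ we obtain $\mathbb{E}[c^{\mathrm{causal}}(j^{*})]=N_v+\tfrac{N_d-1}{2}$, and summing over the $N_q$ query tokens gives the claimed $\mathbb{E}[L^{\mathrm{causal}}]=N_q\bigl(N_v+\tfrac{N_d-1}{2}\bigr)$.

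For the final comparison I would substitute $N_q\approx N_t$ and use $N_d\ll N_v$ to discard the $\tfrac{N_d-1}{2}$ term, giving $\mathbb{E}[L^{\mathrm{causal}}]\approx N_t N_v$ against $\mathbb{E}[L^{\mathrm{forward}}]\approx\tfrac{N_t N_v}{2}$: the forward token reaches only a random prefix, covering about half the sequence on average, whereas every generated token is guaranteed the full context prefix, so $\mathbb{E}[L^{\mathrm{causal}}]>2\,\mathbb{E}[L^{\mathrm{forward}}]$ with an absolute gap $\tfrac{N_t(N_v+N_d-1)}{2}=\Theta(N_t N_v)$ that diverges. I expect the main obstacle to be definitional rather than computational: one must justify that the argmax index is (approximately) uniform, since otherwise the expected position is not $N_v/2$, and one must fix a single convention for what counts as a preceding token (exclusive of the selected token, with the full visual prefix counted on the causal side) so that the two formulas line up cleanly. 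It is in this sense — a strict, at-least-twofold, and magnitude-unbounded domination — that I would read and state the concluding $\gg$.
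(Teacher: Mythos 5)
Your proposal matches the paper's proof essentially step for step: decompose $L$ by linearity over query tokens, compute the per-token coverage as the count of predecessors of the uniformly distributed argmax index, and obtain $N_v/2$ versus $N_v+\tfrac{N_d-1}{2}$ per token. The only difference is an indexing convention in the forward case — you index the document tokens $1,\dots,N_v$ and get $\tfrac{N_v-1}{2}$ up to an $O(N_t)$ correction, whereas the paper indexes them $\{d_0,\dots,d_{N_v}\}$ (drawing $j$ uniformly from $N_v+1$ values) to land exactly on $\tfrac{N_v}{2}$ — which is immaterial to the conclusion.
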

\begin{proof}
We analyze the expected number of document-side tokens that are affected by gradient propagation under the MaxSim objective.

\paragraph{Forward-based Multi-vector Embedding.}
In the forward-based paradigm, document embeddings are produced in parallel and indexed as
$\{d_0, d_1, \dots, d_{N_v}\}$.
For each query token, the MaxSim operation selects a document index
\[
j \sim \mathrm{Uniform}\{0,1,\dots,N_v\}.
\]
Since document tokens are generated independently, only the selected token $d_j$ receives a direct gradient signal, and the number of document tokens preceding it equals $j$.
Thus, the expected preceding-token coverage for a single query token is
\begin{equation}
\mathbb{E}[L^{\mathrm{forward}}_i]
= \frac{1}{N_v+1}\sum_{j=0}^{N_v} j
= \frac{N_v}{2}.
\end{equation}
Summing over all $N_t$ query tokens yields
\begin{equation}
\mathbb{E}[L^{\mathrm{forward}}]
= N_t \cdot \frac{N_v}{2}.
\end{equation}

\paragraph{Auto-regressive Embedding (\method).}
In the auto-regressive paradigm, document embeddings are generated sequentially after the visual tokens,
with indices
\[
\{d_{N_v+1}, \dots, d_{N_v+N_d}\}.
\]
For each query token, the MaxSim-selected index satisfies
\[
j \sim \mathrm{Uniform}\{N_v+1, \dots, N_v+N_d\}.
\]
Due to the causal dependency, the gradient flowing into $d_j$ propagates through all preceding generated tokens.
Hence, the number of affected document tokens equals $j-1$.

The expected preceding-token coverage for a single query token is therefore
\begin{equation}\begin{aligned}
\mathbb{E}[L^{\mathrm{causal}}_i]
&= \frac{1}{N_d}\sum_{k=1}^{N_d} (N_v + k - 1) \\
&= N_v + \frac{N_d - 1}{2}.
\end{aligned}\end{equation}
Aggregating over all $N_q$ query tokens gives
\begin{equation}
\mathbb{E}[L^{\mathrm{causal}}]
= N_q\left( N_v + \frac{N_d-1}{2} \right).
\end{equation}

\paragraph{Comparison.}
When $N_q \approx N_t$ and $N_d \ll N_v$, the dominant term in
$\mathbb{E}[L^{\mathrm{causal}}]$ is $N_q N_v$, while
$\mathbb{E}[L^{\mathrm{forward}}]$ scales as $N_t N_v/2$.
Thus,
\begin{equation}
\mathbb{E}[L^{\mathrm{causal}}]
\approx 2\mathbb{E}[L^{\mathrm{forward}}] + \mathcal{O}(N_q N_d),
\end{equation}
which completes the proof.
\end{proof}
\begin{algorithm}[t]
\small
\SetAlgoLined
\LinesNumbered
\SetKwInOut{Input}{Input}
\SetKwInOut{Output}{Output}
\caption{Training Schema of \method with Auto-Regressive Embedding}
\label{alg:train}
\Input{
    Batch of visual documents $I_1, \dots, I_b$ \\
    Batch of text queries $T_1, \dots, T_b$ \\
    Vision-language model $f_\theta = \{\Phi, \Psi\}$ \\
    Number of doc/query tokens $N_d$, $N_q$ \\
    Loss weights $\lambda_m, \lambda_d, \lambda_q$
}
\Output{Total training loss $\mathcal{L}$}

\ForEach{$(I_k, T_k)$ in batch}{
    \tcp{Step 1: Encode document and query input}
    $H^{(v)}_k \gets \Phi(I_k)$ \tcp*{Visual context from image}
    $C_d \gets H^{(v)}_k$, $C_q \gets T_k$
    
    \tcp{Step 2: Generate latent embeddings auto-regressively}
    $D_k = [\,]$; \quad \For{$t=1$ \KwTo $N_d$}{
        $d_t \gets \Psi([C_d; d_{<t}])$ \tcp*{Append document token}
        $D_k \gets D_k \cup \{d_t\}$
    }
    $Q_k = [\,]$; \quad \For{$t=1$ \KwTo $N_q$}{
        $q_t \gets \Psi([C_q; q_{<t}])$ \tcp*{Append query token}
        $Q_k \gets Q_k \cup \{q_t\}$
    }

    \tcp{Step 3: Compute MaxSim similarity scores}
    $s^+_k \gets \mathcal{S}(Q_k, D_k)$ \tcp*{Positive pair}
    $s^-_k \gets \max_{l\neq k} \mathcal{S}(Q_k, D_l)$ \tcp*{Hard negative}
}

\tcp{Step 4: Compute losses}
$\mathcal{L}_m \gets \frac{1}{b} \sum_{k=1}^{b} \log(1 + \exp(s^-_k - s^+_k))$ \tcp*{Margin-based contrastive loss}

$\mathcal{L}_d \gets - \mathcal{S}(q, d_{1:N_d}^+) + \mathcal{S}(q, d_1^+) + \mathcal{S}(q, d_{1:N_d}^-) - \mathcal{S}(q, d_1^-)$ \tcp*{Progressive refinement}

$\mathcal{L}_q \gets \frac{1}{Z} \sum_{i=1}^{N_q} \sum_{j\neq i} \left( \frac{q_i^\top q_j}{\|q_i\|\|q_j\|} \right)^2$ \tcp*{Diversity regularization}

\Return $\mathcal{L} = \lambda_m \mathcal{L}_m + \lambda_d \mathcal{L}_d + \lambda_q \mathcal{L}_q$
\end{algorithm}

\subsection{Analysis of Proposed Training Objective}
In this section, we further analyze the gradient flow of \method under the overall objective in Eq.~\ref{eq:loss}.
All symbols follow the main body. For convenience, we define
\newcommand{\Score}{\mathcal{S}}
\newcommand{\ip}[2]{\langle #1,#2\rangle}
and recall the late-interaction (MaxSim) scoring function
\begin{equation}\begin{aligned}
\Score(Q,D)
&= \sum_{i=1}^{N_q} \max_{1\le j\le N_d} \ip{q_i}{d_j}
= \sum_{i=1}^{N_q} q_i^\top d_{j^*(i;Q,D)}, \\
j^*(i;Q,D) &\triangleq \arg\max_{1\le j\le N_d} \ip{q_i}{d_j}.
\end{aligned}\end{equation}
\textbf{Prefix-recursive gradient decomposition.}
Due to the auto-regressive dependency, both query- and document-side embeddings admit a prefix-recursive total derivative:
\begin{equation}\label{eq:ddt_prefix}\begin{aligned}
\frac{d q_i}{d\theta}
&= \frac{\partial q_i}{\partial\theta}
  + \sum_{\tau<i}\frac{\partial q_i}{\partial q_\tau}\frac{d q_\tau}{d\theta},
\qquad i=1,\dots,N_q, \\
\frac{d d_t}{d\theta}
&= \frac{\partial d_t}{\partial\theta}
  + \sum_{k<t}\frac{\partial d_t}{\partial d_k}\frac{d d_k}{d\theta},
\qquad t=1,\dots,N_d.
\end{aligned}\end{equation}
Therefore, the score gradient can be written as
\begin{equation}\label{eq:score_grad_prefix}\begin{aligned}
\nabla_\theta \Score(Q,D)
&= \sum_{i=1}^{N_q}
\Bigg[
\underbrace{\Big(\frac{d q_i}{d\theta}\Big)^\top d_{j^*(i;Q,D)}}_{\textbf{query-side signal}}
+
\underbrace{q_i^\top \Big(\frac{d d_{j^*(i;Q,D)}}{d\theta}\Big)}_{\textbf{document-side signal}}
\Bigg]\\
&= \sum_{i=1}^{N_q}
\Bigg[
\Big(\frac{\partial q_i}{\partial\theta}
  + \sum_{\tau<i}\frac{\partial q_i}{\partial q_\tau}\frac{d q_\tau}{d\theta}\Big)^\top
  d_{j^*(i;Q,D)}
+
q_i^\top
\Big(\frac{\partial d_{j^*(i;Q,D)}}{\partial\theta}
  + \sum_{k<j^*(i;Q,D)}\frac{\partial d_{j^*(i;Q,D)}}{\partial d_k}\frac{d d_k}{d\theta}\Big)
\Bigg].
\end{aligned}\end{equation}
Eq.~\eqref{eq:score_grad_prefix} explicitly shows that gradients can penetrate the visual context and propagate through the entire auto-regressive generation chain on both sides.
\par\textbf{Contrastive alignment loss \texorpdfstring{$\mathcal{L}_m$}{Lm}.}
Recall the contrastive objective
\begin{equation}\begin{aligned}
\mathcal{L}_m
&= \frac{1}{b}\sum_{k=1}^b \log\Big(1+\exp(s_k^- - s_k^+)\Big), \\
s_k^+ &\triangleq \Score(Q_k,D_k),\qquad
s_k^- \triangleq \Score(Q_k,D_{\ell^*(k)}),\quad
\ell^*(k)=\arg\max_{\ell\ne k}\Score(Q_k,D_\ell), \\
\delta_k &\triangleq \sigma(s_k^- - s_k^+) - 1,
\end{aligned}\end{equation}
whose gradient is
\begin{equation}\begin{aligned}
\nabla_\theta \mathcal{L}_m
&= \frac{1}{b}\sum_{k=1}^b \delta_k
\Big( \nabla_\theta s_k^+ - \nabla_\theta s_k^- \Big) \\
&= \frac{1}{b}\sum_{k=1}^b \delta_k
\Big( \nabla_\theta \Score(Q_k,D_k) - \nabla_\theta \Score(Q_k,D_{\ell^*(k)}) \Big),
\end{aligned}\end{equation}
where each $\nabla_\theta \Score(\cdot,\cdot)$ is expanded by Eq.~\eqref{eq:score_grad_prefix}.
\par\textbf{Progressive refinement loss \texorpdfstring{$\mathcal{L}_d$}{Ld}.}
For the progressive refinement objective, we have
\begin{equation}\begin{aligned}
\mathcal{L}_d
&= -\Big(\Score(q, D^+_{1:N_d})-\Score(q, d_1^+)\Big)
   +\Big(\Score(q, D^-_{1:N_d})-\Score(q, d_1^-)\Big),
\end{aligned}\end{equation}
and thus
\begin{equation}\begin{aligned}
\nabla_\theta \mathcal{L}_d
&= -\nabla_\theta \Score(q, D^+_{1:N_d})
   +\nabla_\theta \Score(q, d_1^+)
   +\nabla_\theta \Score(q, D^-_{1:N_d})
   -\nabla_\theta \Score(q, d_1^-),
\end{aligned}\end{equation}
where the document-side prefix decomposition is governed by Eq.~\eqref{eq:ddt_prefix}.
\par\textbf{Diversity regularization loss \texorpdfstring{$\mathcal{L}_q$}{Lq}.}
Similarly, the diversity regularizer is defined as
\begin{equation}\begin{aligned}
\mathcal{L}_q
&= \frac{1}{Z}\sum_{i=1}^{N_q}\sum_{j\ne i}
\Big(c_{ij}\Big)^2,
\qquad
c_{ij}\triangleq \frac{q_i^\top q_j}{\|q_i\|\|q_j\|},
\end{aligned}\end{equation}
with the per-token gradient
\begin{equation}\begin{aligned}
g_i \;\triangleq\; \frac{\partial \mathcal{L}_q}{\partial q_i}
&= \frac{2}{Z}\sum_{j\ne i}
c_{ij}\left(
\frac{q_j}{\|q_i\|\|q_j\|}
-\frac{c_{ij}}{\|q_i\|^2}q_i
\right). \label{eq:gi_diversity}
\end{aligned}\end{equation}
Combining with the prefix-recursive derivative in Eq.~\eqref{eq:ddt_prefix}, we obtain
\begin{equation}\begin{aligned}
\nabla_\theta \mathcal{L}_q
&= \sum_{i=1}^{N_q} g_i^\top \frac{d q_i}{d\theta}
= \sum_{i=1}^{N_q} g_i^\top
\left(
\frac{\partial q_i}{\partial\theta}
+\sum_{\tau<i}\frac{\partial q_i}{\partial q_\tau}\frac{d q_\tau}{d\theta}
\right), \label{eq:Lq_grad_prefix}
\end{aligned}\end{equation}
which explicitly exposes the query-side prefix recursion.
\par\textbf{Overall gradient.}
Finally, the gradient of the overall objective $\mathcal{L}=\lambda_m\mathcal{L}_m+\lambda_d\mathcal{L}_d+\lambda_q\mathcal{L}_q$ is
\begin{equation}
\begin{aligned}
\nabla_\theta \mathcal{L}
&= \lambda_m \nabla_\theta \mathcal{L}_m
 + \lambda_d \nabla_\theta \mathcal{L}_d
 + \lambda_q \nabla_\theta \mathcal{L}_q \\
&=
\underbrace{\lambda_m\cdot \frac{1}{b}\sum_{k=1}^b \delta_k
\Big( \nabla_\theta \Score(Q_k,D_k) - \nabla_\theta \Score(Q_k,D_{\ell^*(k)}) \Big)}_{\textbf{MaxSim + AR chains}}
\\ &\quad
+\underbrace{\lambda_d\cdot\Big(
-\nabla_\theta \Score(q, D^+_{1:N_d})
+\nabla_\theta \Score(q, d_1^+)
+\nabla_\theta \Score(q, D^-_{1:N_d})
-\nabla_\theta \Score(q, d_1^-)
\Big)}_{\textbf{Late-early Separation}}
\\ &\quad
+\underbrace{\lambda_q\cdot \sum_{i=1}^{N_q} g_i^\top
\left(
\frac{\partial q_i}{\partial\theta}
+\sum_{\tau<i}\frac{\partial q_i}{\partial q_\tau}\frac{d q_\tau}{d\theta}
\right)}_{\textbf{Query Orthogonalization}}.
\end{aligned}
\end{equation}
\subsection{Training Process of \method}
To describe the training process of our framework more precisely, we illustrate it in pseudocode, as shown in Algorithm~\ref{alg:train}.
\section{More Related Work}
\subsection{Matryoshka Representation Learning}\label{app:matryoshka_representation_learning}
Matryoshka Representation Learning (MRL), proposed by~\cite{kusupati2022MRL}, aims to encode features at multiple granularities within a single vector using a nested structure. By incorporating training-time objectives across various truncation dimensions, MRL enables a flexible trade-off between performance and efficiency. This method has been widely adopted in recent embedding models~\cite{meng2025vlm2vec2, qwen3vlembedding, zhuang2024starbucksv2}.
Recently, MetaEmbed~\cite{xiao2025metaembed} extended this idea to a token-wise Matryoshka structure by truncating prefix sequences of varying lengths during multi-vector training, where the selectable lengths are predefined.
In our work, we observe the natural emergence of the Matryoshka phenomenon in \method models as a consequence of autoregressive training. This observation offers new insights into the training dynamics and intrinsic characteristics of our proposed method.
\section{Training Details}
\label{app:train}
We trained CausalQwen and CausalPali on the \textsc{ColPali} training set using four NVIDIA A800 GPUs for one epoch, with a batch size of 8. During training, we disabled gradient checkpointing and enabled KV caching. Low-Rank Adaptation (LoRA) was employed with a configuration of rank $r=32$ and $\alpha=32$. The learning rate was set to $2\times 10^{-5}$, and the temperature was maintained at 0.02. For the \method loss, the hyperparameters were configured as $\lambda_m=1$ and $\lambda_d=\lambda_q=0.1$.
Our training framework is based on the official implementation of \textsc{ColPali}\footnote{\url{https://github.com/illuin-tech/colpali}}. In our main experiments, we set $N_q=16$ and $N_d=32$, and we explored various values of $N_d$ (32, 64, 128) to analyze the train- and test-time scaling behaviors of \method.
\section{Supplemental Results}\label{app:exp}

\subsection{Empirical Validation of Uniform Distribution Assumption}
\label{app:uniform_dist}
In Section~\ref{app:comp}, our theoretical analysis of training efficiency relies on the assumption that MaxSim-selected document indices are uniformly distributed. To validate this assumption empirically, we visualize the distribution of selected indices for both our autoregressive method and the standard forward-pass method on real and synthetic data.

\begin{figure}[h]
    \centering
    \begin{minipage}{0.48\linewidth}
        \centering
        \includegraphics[width=\linewidth]{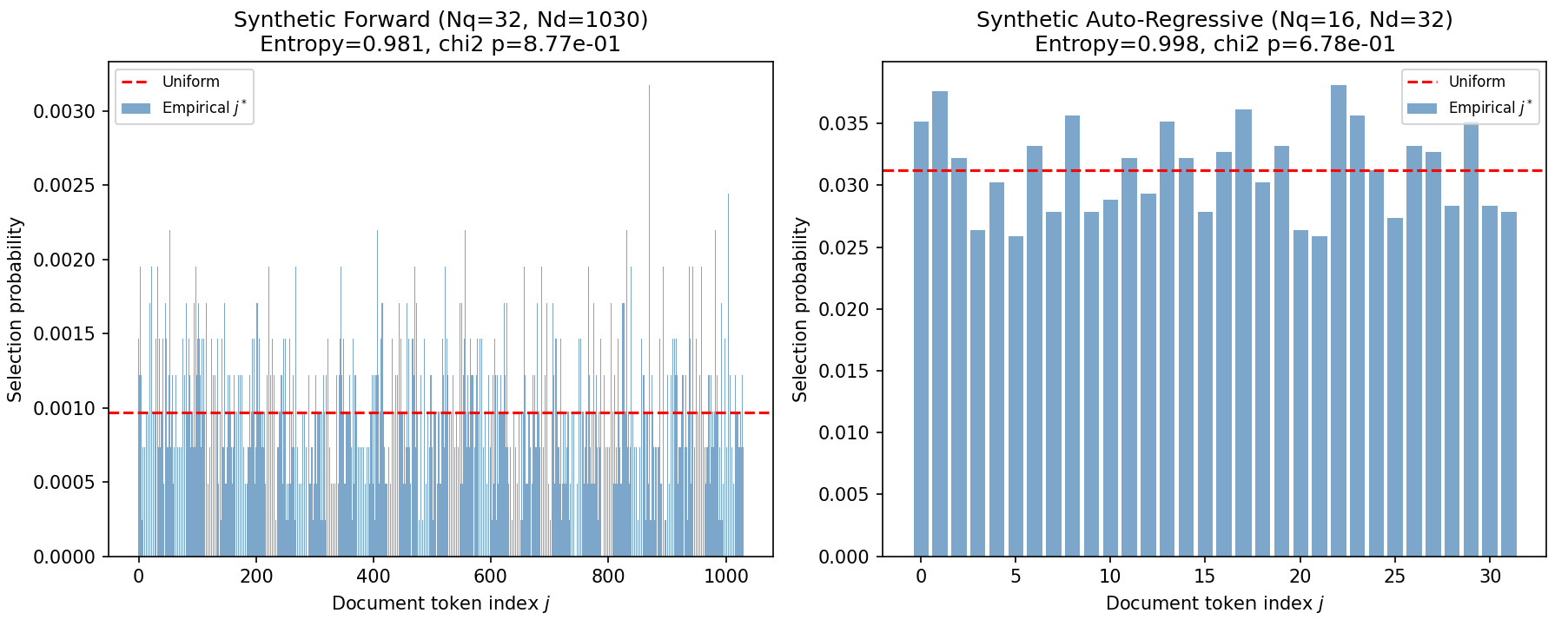}
    \end{minipage}
    \hfill
    \begin{minipage}{0.48\linewidth}
        \centering
        \includegraphics[width=\linewidth]{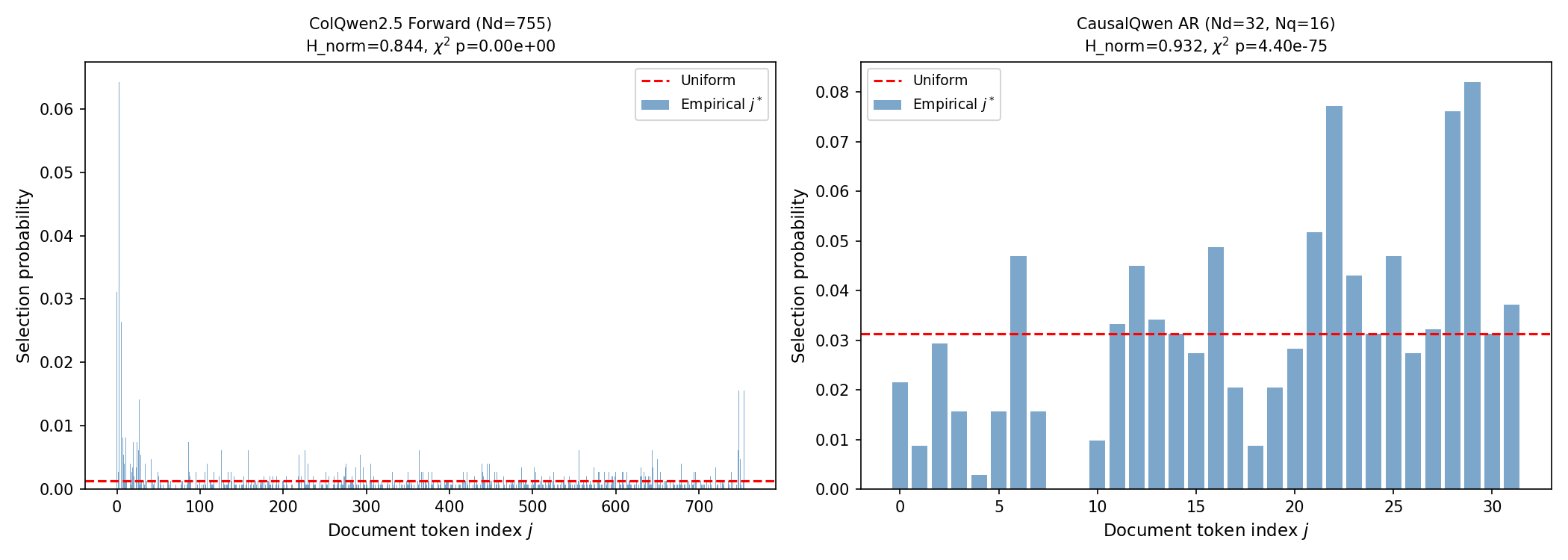}
    \end{minipage}
    \caption{Empirical distribution of MaxSim-selected document indices on real data (left) and synthetic data (right). The distributions align remarkably well with the theoretical uniform distribution assumption, validating our efficiency analysis in Theorem~\ref{thm:training-efficiency}.}
    \label{fig:dist_validation}
\end{figure}

As shown in Figure~\ref{fig:dist_validation}, the empirical distributions of our autoregressive method closely match the theoretical uniform distribution on both real-world and synthetic datasets. This alignment robustly validates the assumptions underlying our training efficiency analysis.

\subsection{Training Dynamics}\label{app:training_efficiency}
We visualize the training dynamics of CausalQwen and CausalPali in Figure~\ref{fig:loss}. Unlike conventional multi-vector approaches, where only a selected subset of tokens contributes to the gradient updates of the backbone, \method conditions on all preceding tokens and enables more comprehensive parameter updates at each generation step. As shown in Figure~\ref{fig:loss}, both training and evaluation losses decrease rapidly and converge within a \textit{single epoch} of approximately 10{,}000 steps, a behavior commonly observed in auto-regressive LLM pre-training. This training trajectory indicates that \textit{\method achieves efficient sample utilization and rapid convergence under limited computational budgets}, whereas previous methods often require 3-5 epochs to reach comparable stability.

\begin{figure}[t]
\centering
\includegraphics[width=0.48\textwidth]{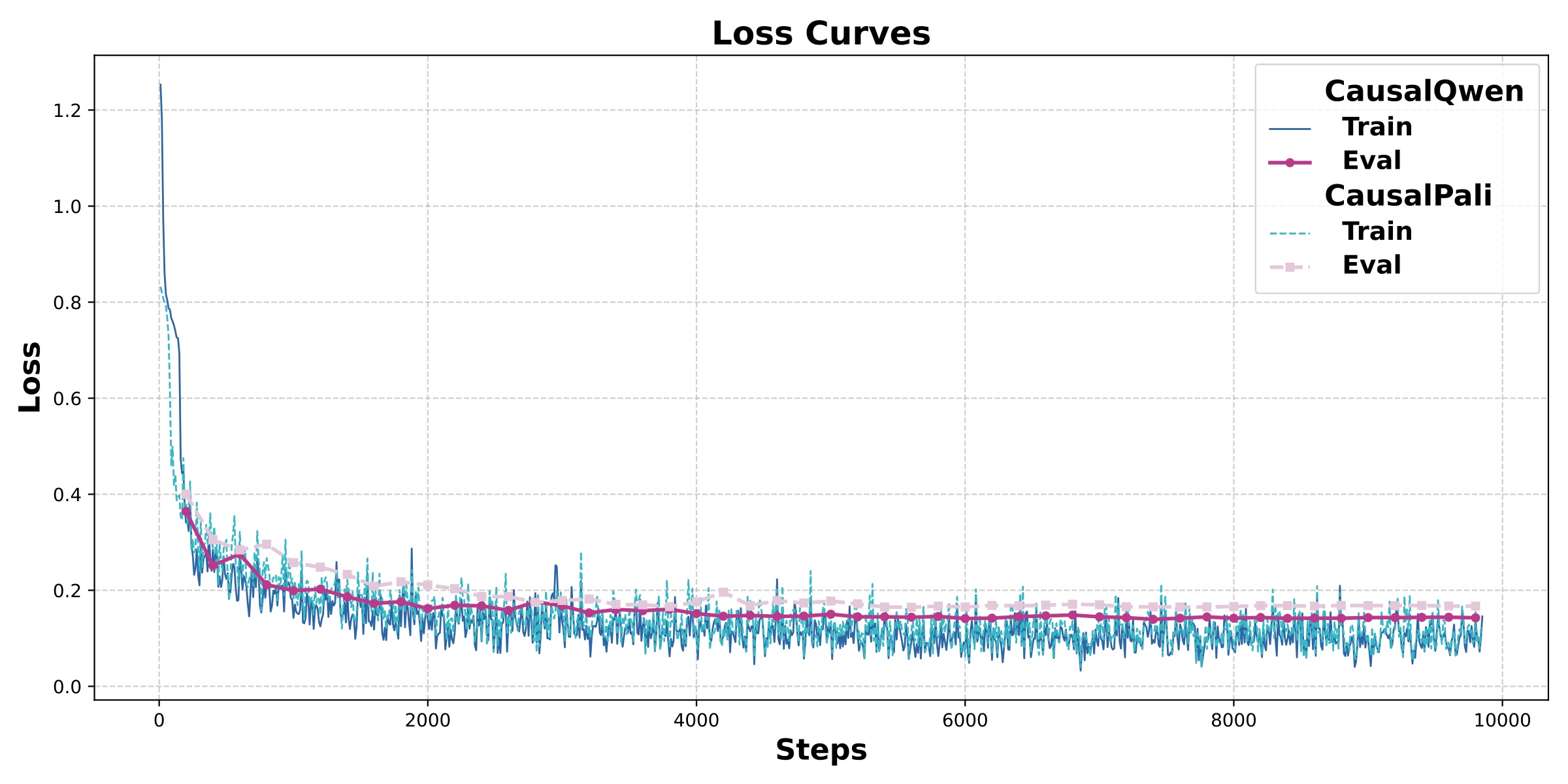}
\caption{Training and evaluation loss curves of CausalQwen and CausalPali over one epoch.}
\label{fig:loss}
\end{figure}

\subsection{Comprehensive Latency Breakdown}
\label{app:latency_breakdown}
To provide a complete picture of the latency trade-offs, we present a detailed end-to-end latency breakdown in Table~\ref{tab:latency_e2e}. The benchmarks are averaged over 5 runs with 512 samples and a batch size of 64. Here, $T_f$ denotes forward pass time and $T_a$ denotes aggregation/autoregressive time.

\begin{table}[h]
\centering
\caption{End-to-end latency breakdown (in milliseconds) comparing \method with baseline methods. While autoregressive generation introduces a slight query encoding overhead (702.68 ms vs 219.81 ms for Vanilla), it dramatically accelerates the critical Late Interaction (MaxSim) phase by approximately 11.8$\times$ (3.74 ms vs 44.30 ms) due to extreme token compression.}
\label{tab:latency_e2e}
\renewcommand{\arraystretch}{1.1}
\begin{tabular}{l|c|c|c|c|c}
\toprule
\textbf{Method} & \textbf{Doc Tokens} & \textbf{Doc Encode} & \textbf{Query Encode} & \textbf{Late Interaction} & \textbf{Total E2E} \\
& & $(T_f + T_a)$ & $(T_f + T_a)$ & & (ms) \\
\midrule
Vanilla    & 759 & 80309.26 & 219.81 & 44.30 & 80573.37 \\
1D-Pool    & 32  & 80442.32 & 219.81 & 5.28  & 80667.42 \\
KMeans     & 32  & 102459.53 & 219.81 & 4.39 & 102683.74 \\
SemCluster & 32  & 97120.73 & 219.81 & 4.04  & 97344.58 \\
\rowcolor{black!10} Ours & 32 & 81215.20 & 702.68 & 3.74 & 81921.61 \\
\bottomrule
\end{tabular}
\end{table}

The results demonstrate that although \method introduces additional query encoding time due to autoregressive generation, this overhead is more than compensated by the substantial speedup in the MaxSim computation phase. This trade-off is particularly advantageous in large-scale retrieval scenarios where the late interaction phase dominates the overall latency.

\subsection{Test-time Scaling}
In Figure~\ref{fig:app_scaling}, we present additional results on the test-time scaling behavior of \method across different datasets. The experiments are primarily conducted using CausalQwen. The overall trends consistently align with our analysis in Section~\ref{sec:tts}.

\begin{figure}[t]
    \centering
    \includegraphics[width=\linewidth]{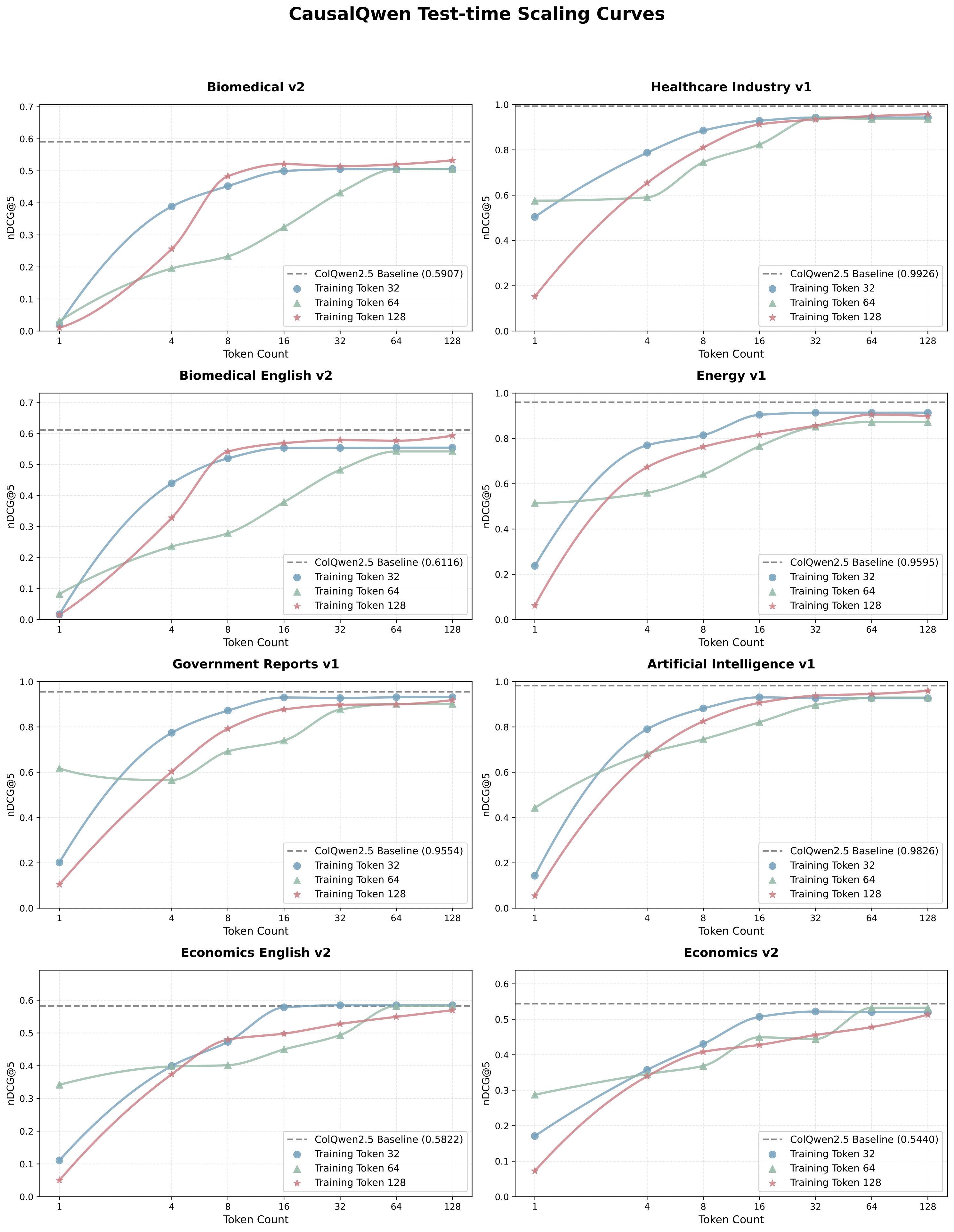}
    \caption{Additional results illustrating the test-time scaling characteristics of CausalQwen on \vidore~V1 and V2.}
    \label{fig:app_scaling}
\end{figure}

\subsection{Case Study}
\begin{figure*}[t]
    \centering
    \begin{minipage}{\linewidth}
        \centering
\includegraphics[width=\linewidth]{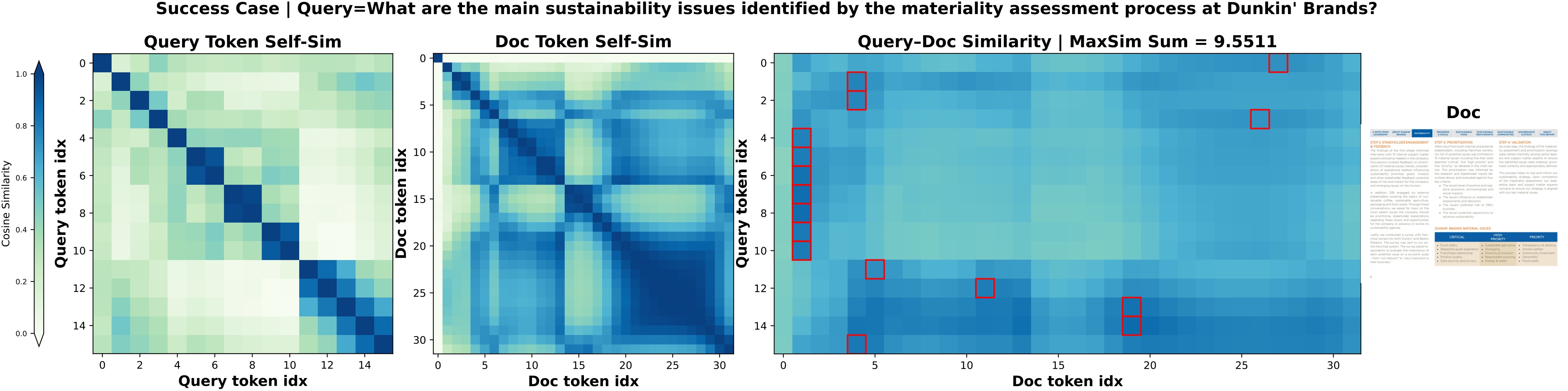}
    \end{minipage}
    \vfill
    \begin{minipage}{\linewidth}
        \centering \includegraphics[width=\linewidth]{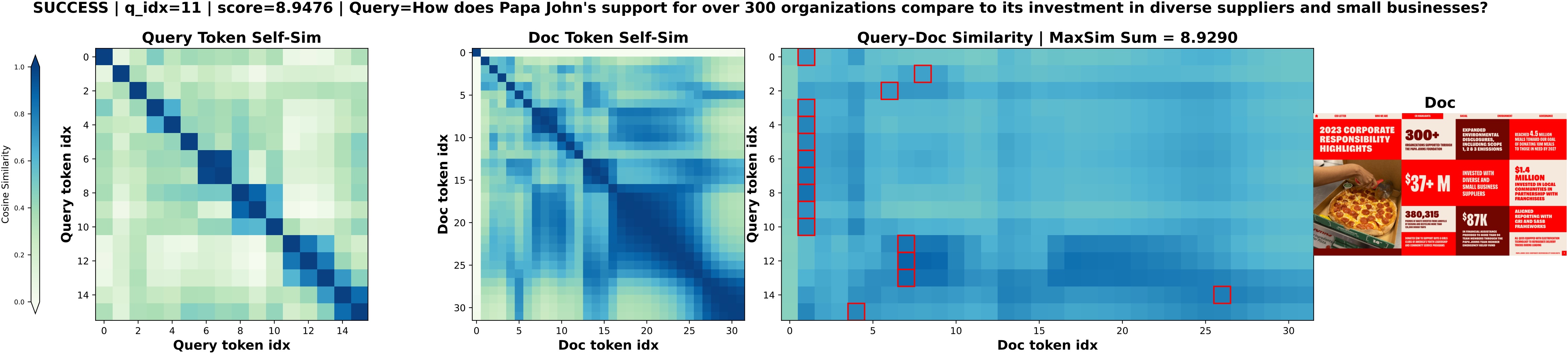}
    \end{minipage}
    \caption{Success cases of CausalQwen.}
    \label{fig:app_qwen_success}
\end{figure*}

\begin{figure*}[t]
    \centering
    \begin{minipage}{\linewidth}
        \centering
\includegraphics[width=\linewidth]{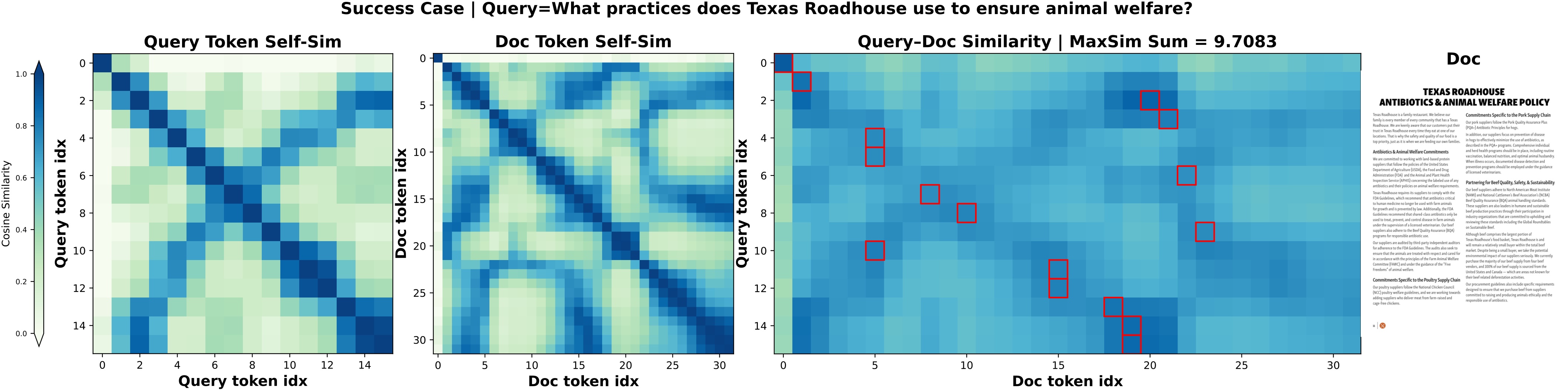}
    \end{minipage}
    \vfill
    \begin{minipage}{\linewidth}
        \centering \includegraphics[width=\linewidth]{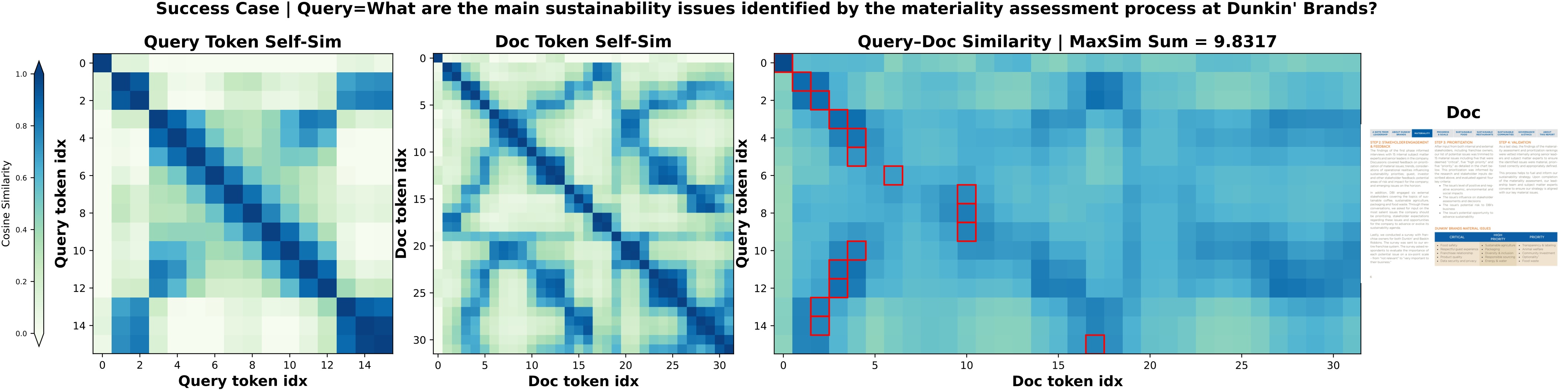}
    \end{minipage}
    \caption{Success cases of CausalPali.}
    \label{fig:app_pali_success}
\end{figure*}

\begin{figure*}[t]
    \centering
    \begin{minipage}{\linewidth}
        \centering
\includegraphics[width=\linewidth]{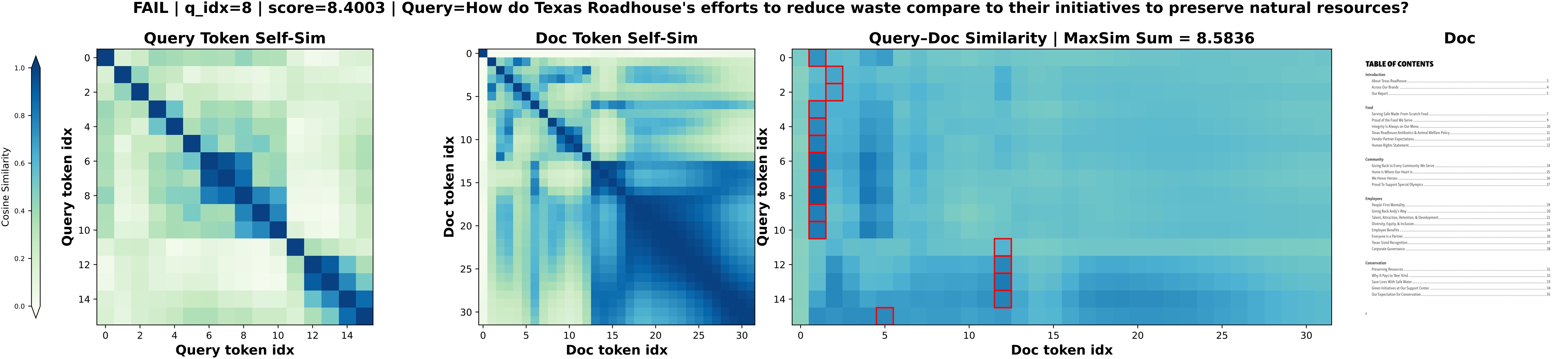}
    \end{minipage}
    \vfill
    \begin{minipage}{\linewidth}
        \centering \includegraphics[width=\linewidth]{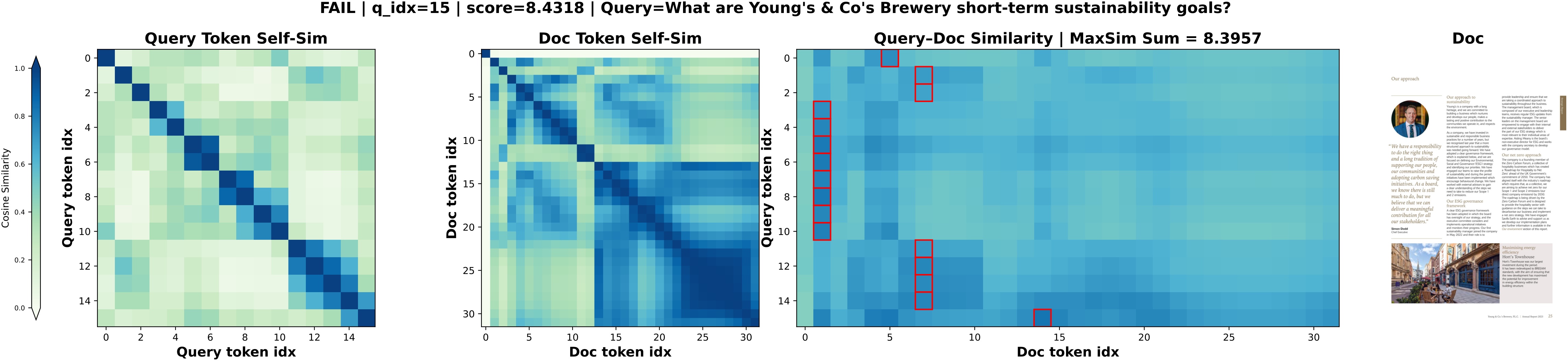}
    \end{minipage}
    \caption{Failure cases of CausalQwen.}
    \label{fig:app_qwen_fail}
\end{figure*}

\begin{figure*}[t]
    \centering
    \begin{minipage}{\linewidth}
        \centering
\includegraphics[width=\linewidth]{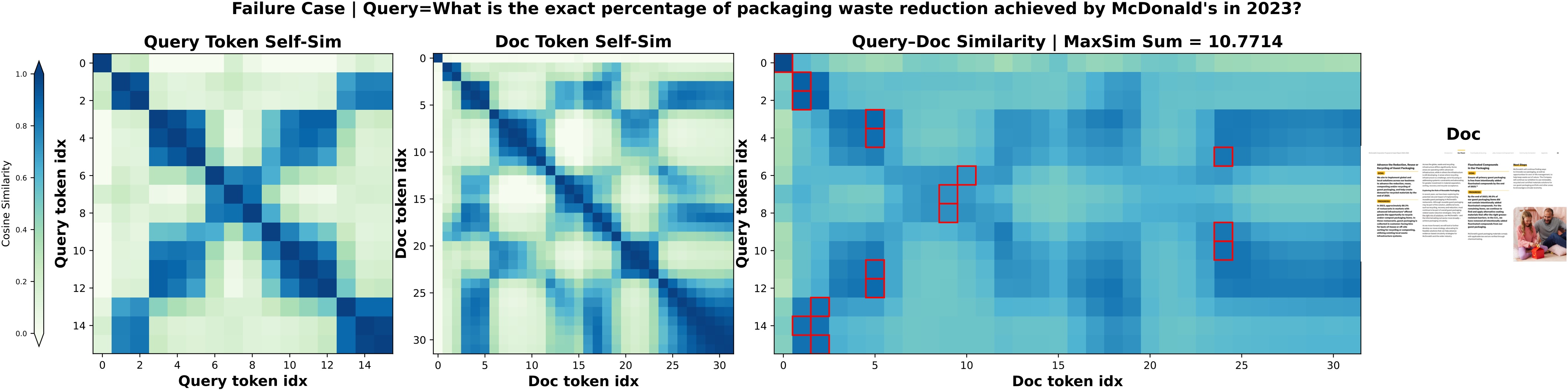}
    \end{minipage}
    \vfill
    \begin{minipage}{\linewidth}
        \centering \includegraphics[width=\linewidth]{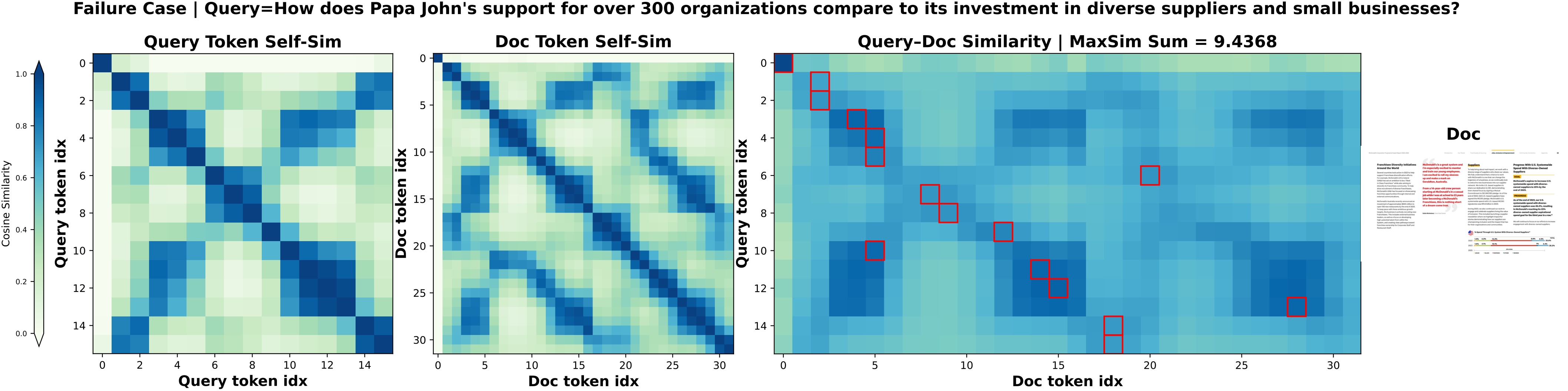}
    \end{minipage}
    \caption{Failure cases of CausalPali.}
    \label{fig:app_pali_fail}
\end{figure*}
In Figures~\ref{fig:app_qwen_success},~\ref{fig:app_pali_success},~\ref{fig:app_qwen_fail}, and~\ref{fig:app_pali_fail}, we present additional examples from CausalQwen and ColPali. These cases illustrate the multi-vector representations generated by the models, along with their corresponding interactions.

\end{document}